\documentclass[10pt,journal,compsoc]{IEEEtran}


\newcommand{\arxiv}{}

\ifCLASSOPTIONcompsoc
  \usepackage[nocompress]{cite}
\else
  \usepackage{cite}
\fi

\ifCLASSOPTIONcompsoc
  \usepackage[caption=false,font=footnotesize,labelfont=sf,textfont=sf]{subfig}
\else
  \usepackage[caption=false,font=footnotesize]{subfig}
\fi
%


\newfont{\mycrnotice}{ptmr8t at 7pt}
\newfont{\myconfname}{ptmri8t at 7pt}

\usepackage{amsmath}
\usepackage{amssymb}
\usepackage{amsthm}
\usepackage{color}
\usepackage{pgfplots}
\usepackage{tabularx}

\ifdefined\arxiv
\usepackage[final]{changes}
\else
\usepackage{changes}
\fi



\usepackage{amsmath}
\usepackage{amssymb}
\usepackage{latexsym}
\usepackage{array}
\usepackage{stmaryrd}
\usepackage{paralist}
\usepackage{graphicx} 
\usepackage{caption}
\usepackage{xspace}
\usepackage{subfig}
\usepackage{url}
\usepackage{tabulary,booktabs,multirow}
\usepackage[ruled,lined,linesnumbered,boxed,vlined]{algorithm2e}
\usepackage{hyperref}

\usepackage{tikz}
\usetikzlibrary{positioning,arrows,automata,shapes,decorations.pathreplacing}
\usepackage{footnote}
\usepackage{subfig}
\usepackage{xcolor}

\newcommand{\set}[1]{\ensuremath{\left\{#1\right\}}} 

\newcommand{\cO}{\mathcal{O}}
\newcommand{\cA}{\mathcal{A}}

\newtheorem{theorem}{Theorem}
\newtheorem{proposition}{Proposition}
\newtheorem{corollary}{Corollary}

\newtheorem{remark}{Remark}
\newtheorem{lemma}{Lemma}

\title{Solving the Workflow Satisfiability Problem using General Purpose Solvers\thanks{This is an extended version of \cite{GutinK20} published in the proceedings of the 25th ACM Symposium on Access Control Models and Technologies.}}

%
%
%

\begin{document}

\IEEEtitleabstractindextext{%
\begin{abstract}
The workflow satisfiability problem (WSP) is a well-studied problem in access control seeking allocation of authorised users to every step of the workflow, subject to workflow specification constraints.
It was noticed that the number $k$ of steps is typically small compared to the number of users in the real-world instances of WSP; therefore $k$ is considered as the parameter in WSP parametrised complexity research.
While WSP in general was shown to be W[1]-hard, WSP restricted to a special case of user-independent (UI) constraints is fixed-parameter tractable (FPT)\@.
However, restriction to the UI constraints might be impractical.

To efficiently handle non-UI constraints, we introduce the notion of branching factor of a constraint.
As long as the branching factors of the constraints are relatively small and the number of non-UI constraints is reasonable, WSP can be solved in FPT time.

Extending the results from Karapetyan et al.~(2019), we demonstrate that general-purpose solvers are capable of achieving FPT-like performance on WSP with arbitrary constraints when used with appropriate formulations.
This enables one to tackle most of practical WSP instances.
While important on its own, we hope that this result will also motivate researchers to look for FPT-aware formulations of other FPT problems.
\end{abstract}

\begin{IEEEkeywords}
Workflow satisfiability problem, fixed parameter tractability, constraints, authorisations
\end{IEEEkeywords}}



\author{Daniel~Karapetyan,
        and~Gregory~Gutin
\IEEEcompsocitemizethanks{\IEEEcompsocthanksitem D. Karapetyan was with the School of Computer Science, University of Nottingham, UK.\protect\\
E-mail: daniel.karapetyan@nottingham.ac.uk
\IEEEcompsocthanksitem G. Gutin was with the Department of Computer Science, Royal Holloway, University of London, UK.\protect\\
E-mail: g.gutin@rhul.ac.uk
}}

\maketitle%
\IEEEdisplaynontitleabstractindextext

\section{Introduction}
\label{sec:intro}

Many businesses and other organisations use computerised systems to manage their business processes.
A common example of such a system is a workflow management system, which is responsible for the coordination and execution of steps in a business process.
Such a system is normally multi-user and thus it should include some form of access control which is facilitated by various restrictions on users to perform
steps. 
It can be highly non-trivial to decide whether all the steps can be assigned to available users such that all restrictions are satisfied. 
Such a decision problem
is called the \textsc{Workflow Satisfiability Problem} (WSP).

\textbf{Example:} 
Let us consider the following simple, illustrative example of an instance of the WSP\@.
Figure~\ref{fig:workflow-example} depicts a purchase order processing introduced in~\cite{Cr05}.
As shown in Figure~\ref{fig:workflow-example-steps},
in the first two steps of the workflow, the purchase order is created and approved (and then dispatched to the supplier).
The supplier will submit an invoice for the goods ordered, which is processed by the `create payment' step.
When the supplier delivers the goods, a goods received note (GRN) must be signed and countersigned.
Only then may the payment be approved and sent to the supplier. 

\begin{figure}[h]
\centering
\footnotesize
\begin{minipage}{.45\columnwidth}
\centering
\subfloat[Tasks]{
\label{fig:workflow-example-steps}
\setlength{\extrarowheight}{1pt}
  \begin{tabular}{ll}
    \hline
    $s_1$ & create purchase order \\
    $s_2$ & approve purchase order \\
    $s_3$ & sign GRN \\
    $s_4$ & create payment \\
    $s_5$ & countersign GRN \\
    $s_6$ & approve payment \\
    \hline
  \end{tabular}
}
\end{minipage}
\hfill
\begin{minipage}{.45\columnwidth}
\centering
\subfloat[Constraints]{
\label{fig:workflow-example-constraints}
\centering
\begin{tikzpicture}[t/.style={circle,draw,inner sep=2pt,minimum width=12pt}]
  \node[t] (t1) {$s_1$};
  \node[t,above=of t1] (t2)  {$s_2$};
  \node[t,left=of t1] (t3)  {$s_3$};
  \node[t,right=of t1] (t4)  {$s_4$};
  \node[t,above=of t3] (t5)  {$s_5$};
  \node[t,above=of t4] (t6)  {$s_6$};
  \path (t1) edge [dotted] node {$=$} (t3)
        (t3) edge [dotted] node {$\ne$} (t5)
        (t1) edge [dotted] node[swap] {$\ne$} (t4)
        (t1) edge [dotted] node[swap] {$\ne$} (t2)
        (t4) edge [dotted] node[swap] {$\ne$} (t6);
\end{tikzpicture}
}
\end{minipage}
\caption{A simple constrained workflow for purchase order processing}\label{fig:workflow-example}
\end{figure}
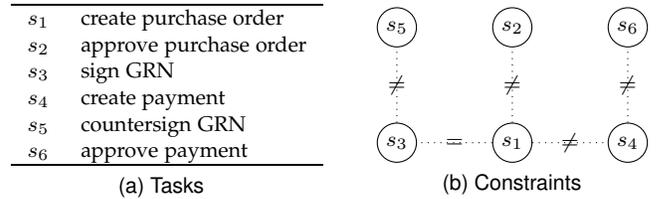

Figure~\ref{fig:workflow-example-constraints} shows constraints to prevent possible fraudulent use of the purchase order processing system.
In our example, these constraints restrict the users that can perform pairs of steps in the workflow: the same user cannot sign and countersign the GRN, for example.
There may also be a requirement that some constraints are performed by the same user.
In our example, the user that raises a purchase order is also required to sign for receipt of the goods. 
All in all, Part (b) shows five constraints:
\begin{align*}
\text{SoD with scopes } & (s_1,s_2), (s_1,s_4), (s_3,s_5), (s_4,s_6)\\
\text{BoD with scope } & (s_1,s_3)
\end{align*}
where SoD (for \emph{separation of duty}) is a binary constraint  meaning that the steps have to be assigned to different users 
and BoD (for \emph{binding of duty}) is a binary constraint meaning that the steps have to be assigned to the same user. 
SoD and BoD constraints can be found in~\cite{ansi-rbac04}.

Let $S=\set{s_i:\ i=1,2,\dots ,6}$ be the set of steps described in Figure~\ref{fig:workflow-example-steps}.
To complete a WSP specification in this example, we introduce a set  $U=\{u_i:\ i=1,2,\dots , 8\}$ of users and 
describe authorisation lists: 
$$
\begin{array}{@{}ll@{}}
A(s_1)=\{u_1,u_2\}, &A(s_2)=\{u_2,u_3\},\\
A(s_3)=\{u_1,u_3\}, &A(s_4)=\{u_3,u_4\},\\
A(s_5)=\{u_3,u_4,u_5,u_8\}, &A(s_6)=\{u_5,u_6,u_7\}.
\end{array}
$$
The authorisation list $A(s_i)$ is the set of all users which can perform $s_i.$ 

It is not hard to verify that the following assignment $\pi:\  S\rightarrow U$ satisfies all the constraints and authorisations:
\begin{equation}
\label{eq:plan-example}
\begin{array}{lll}
\pi(s_1)=u_1,\ &\pi(s_2)=u_2,\ &\pi(s_3)=u_1,\ \\
\pi(s_4)=u_4,\, & \pi(s_5)=u_3,\, &\pi(s_6)=u_5.
\end{array}
\end{equation}

\subsection{General-case Workflow Satisfiability Problem}
Research on workflow satisfiability began with the seminal work of Bertino, Ferrari and Atluri~\cite{BeFeAt99} and Crampton~\cite{Cr05}.
In general, the Workflow Satisfiability Problem (WSP)\footnote{For a formal definition, see the beginning of Section \ref{sec:WSPwithUI}.} contains a set $S$ of steps, a set $U$ of users and some constraints and authorisations restricting performance of steps by users (the difference between an authorisation and a constraint is that while the former involves just one step, the latter involves at least two steps). The aim is to decide whether there is an assignment of users to all steps such that all authorisations and steps are satisfied. Such WSP instances are called satisfiable; the other WSP instances are unsatisfiable.

The WSP in all its generality is a problem with complicated constraints which may require a certain partial order in which the steps are performed, exclusion of certain steps from being performed \deleted{by a plan} under certain conditions, etc., see e.g.~\cite{BertolissiSR18,CrGuDM,CrGuWa16,dosSantosR17}. 
However, it is possible to reduce such a general WSP to a number of more basic WSPs in which there are no constraints imposing a partial order of the steps (i.e. the order in which the steps are performed is immaterial), no steps are excluded from being performed, etc., see e.g.~\cite{CrGuDM,CrGuWa16}. 
Because of the reductions and since new WSP notions introduced in this paper  are already of interest  for the basic WSP, we will restrict our attention to the basic WSP (an instance of such a WSP is depicted in Figure~\ref{fig:workflow-example}) which is described above and formally defined in Section~\ref{sec:WSPwithUI}.

\subsection{Special cases of WSP} 
Some authors studying and using the WSP restrict themselves to binary SoD and BoD constraints only, which are the most common WSP constraints~\cite{BertolissiSR15,CompagnaSPR16,dosSantosPR16}. 
Even restricted to SoD constraints only, WSP is NP-hard~\cite{WaLi10}.
Wang and Li~\cite{WaLi10} introduced relatively simple generalisations of binary SoD and BoD constraints and showed that WSP with such constraints admits efficient, fixed-parameter tractable (FPT) algorithms\footnote{For a brief introduction to FPT algorithms, see Section~\ref{sec:fpt}.} when parameterised by $k=|S|$.
Crampton et al.~\cite{CrGuYe13} introduced a more general family of WSP constraints, so-called regular constraints, for which the WSP still admits an FPT algorithm. 
While theoretically the algorithm of~\cite{CrGuYe13} is fast, it is unlikely to be so in practice as it uses the method of inclusion-exclusion which makes worst and best exponential running times close to each other. 
Cohen et al.~\cite{CoCrGaGuJo14} introduced the concept of the \emph{user-independent} (UI) constraints and demonstrated that WSP with UI constraints only can be solved in FPT time.
UI constraints form a useful constraint family as it includes not only all the constraints studied in~\cite{CrGuYe13} and~\cite{WaLi10}, but also all the constraints listed by the American National Standards Institute in~\cite{ansi-rbac04}. 

Cohen et al.~\cite{CohenCGGJ16} implemented their algorithm for UI constraints and showed that it is of practical interest. 
Karapetyan et al.~\cite{KarapetyanPGG19} introduced and studied a different kind of backtracking algorithm for the WSP with UI constraints, which turned out to be by many orders of magnitude faster than the algorithm of~\cite{CohenCGGJ16}.
Moreover, Karapetyan et al.~showed that similar concepts can be used in formulations for general-purpose solvers such as the \added{Boolean satisfiability} solver SAT4J~\cite{BerreP10}, and the resulting methods are sufficient for most of the WSP instances as long as all the constraints are UI\@. 
Crampton et al.~\cite{CrGaGuJoWa16} introduced a generalisation of UI constraints, the family of class-independent (CI) constraints, and proved that the WSP with CI constraints only admits an FPT algorithm. 
\added{CI constraints are defined in terms of equivalence relations over the set of users.
In particular, a CI constraint can request that two steps are required to be assigned to users from two different groups (i.e., departments).}
However, to use CI algorithms, the set $U$ of users has to have a hierarchical structure (e.g.\ if $U$ are users in some hierarchical organisation), which is rather restrictive.  
Dos Santos and Ranise~\cite{dosSantosR17} developed a software tool for the software company SAP which can be used to work, in particular, with the run-time version of the WSP which has UI and CI constraints. 
Apart from SAT and Constraint Satisfaction Problem solvers used in~\cite{CohenCGGJ16,KarapetyanPGG19}, many researchers studied WSP satisfiability using solvers based on Satisfiability Modulo Theories and Optimization Modulo Theories, see e.g.~\cite{BertolissiSR18,dosSantosR17}.
We refer the reader to~\cite{dosSantosR17survey,HoldererAM15} for surveys on workflow satisfiability approaches.

\subsection{This work}
So far, the research of the WSP has been focused on the special cases that admit efficient algorithms.
The imposed restrictions could limit the practicality of the developed algorithms.
The goal of this paper is to study solution methods for the unrestricted version of the WSP\@.
Concerning the implementation, we focus on the formulations for the general-purpose solvers.
Software solutions based on general-purpose solvers tend to be cheaper to develop and maintain than bespoke algorithms and thus strongly preferred by practitioners.

First, we give a novel approach to tackle the non-UI constraints.
We show that any constraint can be absorbed by authorisations, if we extend the definition of authorisations.
This absorption may increase the size of the problem, however in practice this increase is unlikely to be significant, and the theoretical properties of the problem are preserved.
Most importantly, this approach still handles UI constraints very efficiently, hence we expect that it will perform very well on real-world instances where the majority of constraints are likely to be UI and only a few constraints may require the new `absorption' technique.


To quantify by how much the problem size increases by the constraint absorption, we introduce the notion of \emph{branching factor} of a WSP constraint.
Branching factor is motivated by the basic algorithm of Karapetyan et al.~\cite{KarapetyanPGG19}, see Algorithm~\ref{alg:PBA} in Section~\ref{sec:fpt}. 
To the best of our knowledge, the only other complexity measure for a WSP constraint is diversity which was introduced by Cohen et al.~\cite{CoCrGaGuJo14}. 
Diversity was also motivated by an algorithm (and used to estimate its running time), but the algorithm in~\cite{CoCrGaGuJo14} is less efficient from the theoretical point of view: its running time for UI constraints is $\cO(3^kB_k N^{O(1)})$ rather than $\cO(B_kN^{O(1)})$ of Algorithm~\ref{alg:PBA}, where $k=|S|,$ $B_k$ is the $k$th Bell number, and $N$ is the size of the problem. 
More importantly, computational experiments in~\cite{KarapetyanPGG19} clearly demonstrated that the approach based on the concept of constraint diversity is many orders of magnitude slower than the approach based on the basic algorithm of~\cite{KarapetyanPGG19}.
Finally, establishing the diversity of a constraint is often a significant challenge~\cite{CoCrGaGuJo14} whereas identifying the branching factor is usually straightforward as we show in Section~\ref{sec:CDA}.

The main contributions of this paper are as follows: 
\begin{itemize}
	\item
	The concept of a constraint branching factor as a new practical measure of WSP constraints that enables FPT algorithm for the general-case WSP;
		
	\item
	A systematic study of formulations of the WSP for general-purpose solvers;
	
	\item
	A pseudo-random instance generator for the general WSP and a methodology to produce consistently-hard instances with various constraints;
	
	\item
	A rigorous computational study of the solution times for the general-case WSP;
	
	\item
	An empirical analysis of the difficulty of various constraints.
\end{itemize}

Apart from the first point, all these contributions are new to the journal version of the paper.



\section{Parametrised algorithms and complexity}
\label{sec:fpt}

The vast majority of non-trivial decision problems including WSP are intractable, i.e.\ NP-hard. 
One approach for solving an NP-hard decision problem is to use heuristics, but this means that the output is not always correct. 
Another approach is the use of parametrised algorithms. 
In this case, for the decision problem under consideration we assign a parameter (or, a collection of parameters which is usually aggregated to only one parameter, the sum of the original parameters). 
Examples of such parameters are the tree-width, tree-depth or other structural parameters of \added{the given graph in problems on graphs}. 
This way the decision problem $\Pi$ under consideration has two quantities: the size $N$ of the problem and its parameter $\kappa$. 
The usual choice of the parameter is such that $\kappa$ is much smaller than $N$ on the instances of $\Pi$ of interest. 
A problem together with its parameter is called a \emph{parameterised problem}.

The reason for introducing the parameter is to design an algorithm of running time $\cO(f(\kappa)N^c)$, where $f$ is a computable function of $\kappa$ only and $c$ is an absolute constant. 
Such an algorithm is called \emph{fixed-parameter tractable (FPT)}\@.  
FPT algorithms exist for several parameterised NP-hard problems, in which cases $f(\kappa)$ `absorbs' the classical computational complexity of the problem. 
A parameterised problem admitting an FPT algorithm is called an \emph{FPT problem} (or, the problem is in the class FPT)\@.
Unfortunately, there are many parameterised problems for which there are good reasons to believe that they are not FPT\@.  
In parameterised complexity, such problems are called W[1]-hard  and it is widely assumed that $\text{FPT} \neq \text{W[1]}$~\cite{CyganFKLMPPS15}.

To stress the fact that in an FPT algorithm of running time $\cO(f(\kappa)N^c)$ the function $f$ is usually exponential (it does not have to be if the problem is polynomial-time solvable) and thus $f(\kappa)$ is often the dominating factor in the running time, $\cO(f(\kappa)N^c)$ is often simplified to $\cO^*(f(\kappa))$.
For more information about parameterised algorithms and complexity, we refer the reader to the monograph~\cite{CyganFKLMPPS15}.

While parametrised complexity is a theoretical tool, it can also be used in practical studies.
We will say that an algorithm demonstrates \emph{FPT-like performance} if its running time scales consistently with the expectations for an FPT algorithm.
For example, this means that its empirical running time scales polynomially with the size of the problem $N$ if the value of the parameter $\kappa$ is fixed.

\section{WSP and its complexity}
\label{sec:WSPwithUI}

Let $S$ be a set of steps, $U$ a set of users, $C$ a set of non-unary constraints whose scopes are subsets of $S$ (called just \emph{constraints} in what follows\footnote{We will formally define a constraint later in this section.}) and $A$ an \emph{authorisation function} (or, just \emph{authorisation})  $A:\ S \rightarrow 2^U$. 
Thus, every WSP instance is given by a quadruple $(S,U,C,A)$. 
A \emph{plan} $\pi$ is a function $\pi: S\rightarrow U$.
The aim is to decide whether there is plan which is \emph{authorised} i.e.\ $\pi(s)\in A(s)$ for each $s\in S$, and \emph{eligible} i.e.\ satisfies all constraints. 
A plan is called \emph{valid} if it is authorised and eligible.
If a WSP instance $(S,U,A,C)$ has a valid plan, it is called \emph{satisfiable} and otherwise \emph{unsatisfiable}. 
It is easy to see that the WSP with only binary SoD constraints is NP-hard as it is equivalent to the well-known NP-hard \textsc{Graph List Colouring} problem~\cite{diestel}.  

Let us denote $|S|$ by $k$ and $|U|$ by $n$.
Wang and Li~\cite{WaLi10} observed that in real-world WSP instances $k$ is often much smaller than $n$. 
\added{In other words, while the instances of WSP can be large, practical values of $k$ are usually small.}
This led them to introduce parametrisation of the WSP by $k$. 
We will also study this parameterisations like several other papers, cf.~\cite{CoCrGaGuJo14,CohenCGGJ16,GuWa16,KarapetyanPGG19}.
Wang and Li showed that the WSP is W[1]-hard. 

\begin{remark}
\label{rem:Wproof}
In fact, Wang and Li~\cite{WaLi10} proved the following result.  
Let $\rho$ be a binary relation on $U$ and let $S=\set{s_1, \ldots, s_k}$.
Then the WSP is W[1]-hard if $A(s)=U$ for every $s\in S$ and $C$ consists of $k \choose 2$ constraints $c_{i,j}$ with $1\le j<i\le k$ such that a plan $\pi$ satisfies $c_{i,j}$ if and only if $(\pi(s_i),\pi(s_j))\in \rho$.
The W[1]-hardness follows by a simple reduction from the \textsc{(Graph) Independent Set} problem\footnote{Set $\rho$ to be the non-adjacency relation in the input graph.}, which is W[1]-hard.
\end{remark}

Let us now define a WSP constraint formally. 
A constraint $c$ with \emph{scope} $T \subseteq S$ is a pair $(T,\Theta_c),$ where  $\Theta_c$ is a set of functions $\theta:\ T\rightarrow U$ such that $c$ is {\em satisfied by a plan} $\pi$ if for some $\theta\in \Theta_c,$ $\theta(t)=\pi(t)$ for each $t\in T$.
For example, the scope of a (general) SoD constraint $c$ is a subset $T$ of $S$ and  $\theta\in \Theta_c$ if and only if $\theta$ is injective.\footnote{For an example of a non-binary WSP SoD constraint, see Constraint 1 in \cite[Example 1]{WaLi10}.}   
Satisfiability of essentially every real-world constraint $c$ by a plan $\pi$ can be decided without using the formal definition of a constraint (e.g., deciding whether an SoD constraint is satisfied by a plan $\pi$ can be done by a simple inspection of $\pi$).
In this paper, we assume that satisfiability of every constraint by a plan can be decided in polynomial time in $k$ and $n$.

Recall that a constraint is called {\em user-independent (UI)} if its satisfiability does not depend on the identities of the users. 
For example, SoD and BoD constraints are UI\@.
The following generalisations of SoD and BoD constraints are also UI\@.
The scope of an {\em at-least-$p$-out-of-$q$ constraint} is a subset $T$ of $S$ of size $q$
and at least $p$ users can be assigned to the steps of $T$. Analogously, the scope of {\em an at-most-$p$-out-of-$q$ constraint} is a subset $T$ of $S$ of size $q$
and at most $p$ users can be assigned to the steps of $T$.
Formally, a constraint $c=(T,\Theta_c)$ is UI if for every $\theta\in \Theta_c,$
and every permutation $f:\ U \rightarrow U$ of users there is a $\theta'\in \Theta_c$ such that $\theta'(t)=f(\theta(t))$ for each $t\in T.$ 

To naively decide whether a WSP instance $(S,U,C,A)$ has a valid plan, one can generate plans one by one and check whether one of them is valid, stopping when a valid one is found. 
However, such an algorithm is not efficient as the total number of plans is $n^k$. 
Since the WSP is W[1]-hard, it is highly unlikely that much more efficient, i.e., FPT algorithms exist for the most general case. 
However, there are FPT algorithms if all constraints are UI (and with no restrictions on authorisations). To describe such an algorithm, we will use the notion of a pattern introduced  by Cohen et al.  \cite{CoCrGaGuJo14} but we will follow the definition of patterns from Crampton et al. \cite{CrGuKa15b}. 

A {\em pattern} is a partition $p = \{S_1,\dots, S_p\}$ of $S$ into non-empty subsets $S_1, \dots, S_p$  (i.e.\ $S_1\cup \dots \cup S_p=S$ and $S_i\cap S_j=\emptyset$ for every $i\ne j$) 
called {\em blocks}. We will denote the set of all patterns of $S$ by $P(S)$\@. 
For example, if $S=\{s_1,s_2,s_3\}$ then $P(S)=\{p_1,p_2,p_3,p_4,p_5 \},$ where $p_1=\{\{s_1\},\{s_2\},\{s_3\}\}$ (every block of $p_1$ is a singleton), $p_2=\{\{s_1\},\{s_2,s_3\}\}$, $p_3=\{\{s_2\},\{s_1,s_3\}\}$, $p_4=\{\{s_3\},\{s_1,s_2\}\}$ and $p_5=\{\{s_1,s_2,s_3\}\}$ ($p_5$ has only one block).

The {\em pattern} $p$ {\em of a plan} $\pi$ is a partition of $S$ into blocks of steps such that all the steps in a block are assigned the same user but steps in different blocks are assigned different users.
For example, plan~(\ref{eq:plan-example}) has pattern $p = \{ \{ s_1, s_3 \}, \{ s_2 \}, \{ s_4 \}, \{ s_5 \}, \{ s_6 \} \}$.
The number of blocks in $p$ will be denoted by $|p|.$
By the definition of a UI constraint, a pair $\pi',\pi''$ of plans with the same pattern either both satisfy a UI constraint $c$ or both violate it~\cite{CoCrGaGuJo14}. 
Thus, it suffices to know the pattern of a plan $\pi$ (without knowing $\pi$ itself) to decide whether a UI constraint $c$ is satisfied by $\pi$, and we can say whether a \emph{pattern} $p$ of $S$  \emph{satisfies} a UI constraint $c$ (meaning that either all plans with pattern $p$ satisfy $c$ or none do). 
Thus, we have the following observation:
\begin{remark}
\label{rem1}
To decide whether a WSP instance $(S,U,C,A)$ with only UI constraints has an eligible plan, it suffices to go over all patterns of $S$ and check whether there is a pattern that satisfies all constraints in $C$ (we call such a pattern \emph{eligible}). 
\end{remark}

Recall that not every eligible plan is valid as authorisations $A$ have to be satisfied, too. 
Following Karapetyan et al.~\cite{KarapetyanPGG19}, to decide whether there exists a valid plan with an eligible pattern $p$, we construct a bipartite graph $G_p$ with partite sets $p$ (where every vertex is a block of $p$) and $U$ such that $bu\in E(G_p)$ ($b$ is a block in $p$ and $u \in U$) if and only if $u\in A(s)$ for every $s \in b$.
Observe that a plan with pattern $p$ is authorised if and only if $G_p$ contains a matching $M'$ saturating $p$, i.e. for every vertex (block) $b \in p$ there is an edge in $M'$ incident to $b$~\cite{KarapetyanPGG19}. 
Since $k \le n$, this means that a plan with pattern $p$ is authorised if and only if the size of a maximum matching $M$ of $G_p$ equals $|p|$, the number of blocks in $p$.
If $|M| = |p|$, an authorised plan $\pi$ \emph{corresponding} to $M$ can be constructed as follows: $\pi(s) = u$ if and only if $s\in b$ and $bu\in M$~\cite{KarapetyanPGG19}. 
For an example of graph $G_p$, see Figure~\ref{fig:g-p}.

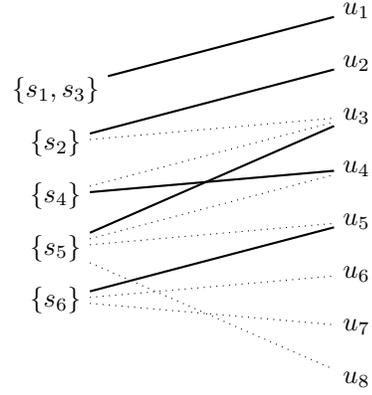
\begin{figure}[tb]
	\centering
	\begin{tikzpicture}[
    	xscale=1,
        yscale=0.7,
		main node/.style={},
		user node/.style={},
        secondary/.style={thick, gray}]

		\node[main node] (13)  at (0, 4.5) {$\set{s_1, s_3}$};
		\node[main node] (2) at (0, 3.5) {$\set{s_2}$};
		\node[main node] (4) at (0, 2.5) {$\set{s_4}$};
		\node[main node] (5) at (0, 1.5) {$\set{s_5}$};
		\node[main node] (6) at (0, 0.5) {$\set{s_6}$};

		\node[user node] (u1) at (4, 6) {$u_1$};
		\node[user node] (u2) at (4, 5) {$u_2$};
		\node[user node] (u3) at (4, 4) {$u_3$};
		\node[user node] (u4) at (4, 3) {$u_4$};
		\node[user node] (u5) at (4, 2) {$u_5$};
		\node[user node] (u6) at (4, 1) {$u_6$};
		\node[user node] (u7) at (4, 0) {$u_7$};
		\node[user node] (u8) at (4, -1) {$u_8$};

		\path[dotted]
      	(2) edge (u3)
    		(4) edge (u3)
    		(5) edge (u4)
    		(5) edge (u5)
    		(5) edge (u8)
    		(6) edge (u6)
    		(6) edge (u7);
    		
		\path[thick]
    		(13) edge (u1)
      	(2) edge (u2)
    		(4) edge (u4)
    		(5) edge (u3)
    		(6) edge (u5);

	\end{tikzpicture}
\caption{
	Graph $G_p$ constructed for the example given in Section~\ref{sec:intro}.
	The authorisations in this example are as follows:
	$A(s_1) = \set{u_1, u_2}$, $A(s_2) = \set{u_2, u_3}$, $A(s_3) = \set{u_1,u_3}$, $A(s_4) = \set{u_3,u_4}$, $A(s_5) = \set{u_3,u_4,u_5,u_8}$ and $A(s_6) = \set{u_5,u_6,u_7}$.  
	The pattern is $\set{\set{s_1, s_3}, \set{s_2}, \set{s_4}, \set{s_5}, \set{s_6}}$.
	The matching shown in bold lines corresponds to the {valid} plan {(\ref{eq:plan-example})}: $\pi(s_1) = \pi(s_3) = u_1$, $\pi(s_2) = u_2$, $\pi(s_4) = u_4$, $\pi(s_5) = u_3$, $\pi(s_6) = u_5$.
    }
\label{fig:g-p}
\end{figure}

This leads to a simple WSP algorithm using patterns whose pseudo-code is given in Algorithm~\ref{alg:PBA}, where all constraints in C are UI and UNSAT indicates that the input instance is unsatisfiable. 
A more sophisticated and much more efficient in computational experiments pattern-based backtracking algorithm was studied by Karapetyan et al.~\cite{KarapetyanPGG19}.

\SetKwInOut{Input}{input}
\SetKwInOut{Output}{output}

\begin{algorithm}[htb]
\caption{Pattern Basic Algorithm (WSP)}
\label{alg:PBA}

\Input {WSP instance $W = (S, U, A, C)$}
\Output {Valid plan $\pi$ or UNSAT}
\For {$p\in P(S)$}
{
	Given $p$, $U$ and $A$, compute $G_p$\; 
	Compute a maximum matching $M$ in $G_p$\;
	\If {$p$ is eligible and $|M|=|p|$}
	{
		\Return {plan $\pi$ corresponding to $M$}\;
	}
}
\Return {UNSAT}
\end{algorithm}

Note that $|P(S)|$, i.e., the number of partitions of a set of size $k$, equals the $k$th Bell number\footnote{All logarithms in this paper are of base 2.} $B_k\le k!=\cO(2^{k\log k})$. 
Now it is not hard to see that the running time of Algorithm~\ref{alg:PBA} is $\cO^*(2^{k\log k})$. 
More advanced algorithms in~\cite{CoCrGaGuJo14,KarapetyanPGG19} for the WSP with only UI constraints are also of running time $\cO^*(2^{k\log k})$.
In fact, the time $\cO^*(2^{k\log k})$ is highly likely to be optimal: Cohen et al.~\cite{CoCrGaGuJo14} proved that unless the Exponential Time Hypothesis (ETH) fails, there is no algorithm of running time $\cO^*(2^{o(k\log k)})$ for the WSP with UI constraints; Gutin and Wahlstr{\"{o}}m~\cite{GuWa16} showed that unless the Strong ETH fails, there is no algorithm of running time $\cO^*(c^{k\log k})$ for any constant $c<2$ for the WSP with UI constraints only.\footnote{
The ETH claims that 3-SAT with $n$ variables cannot be solved in time $O(2^{o(n)})$~\cite{ImPa01}. 
The Strong ETH claims that SAT with $n$ variables cannot be solved in time $O(c^{n})$ for any $c<2$ \cite{ImPaZa01}.
Note that both ETH and Strong ETH are conjectures, but they are often used to show (relative) optimality of algorithm's complexity.}

\section{Handling non-UI constraints}
\label{sec:CDA} 

As shown in Section~\ref{sec:WSPwithUI}, WSP with UI constraints can be solved in FPT time.
However, some practical constraints may not be UI\@.
In Section~\ref{sec:examples-of-non-UI-constraints}, we give a few examples of non-UI constraints.
To handle non-UI constraints efficiently, we introduce a generalisation of WSP in Section~\ref{sec:wsp-cda}; this generalisation lets us absorb non-UI constraints into authorisations.
There is a cost associated with this absorption as it may increase the problem size, and we introduce an appropriate constraint complexity measure in Section~\ref{sec:absorption}.
As long as the measure is relatively small, the problem can be solved in FPT time.
Finally, in Section~\ref{sec:complexity-of-practical-non-UI-constraints}, we show that the new measure is indeed relatively small for most of the practical constraints, and hence the results are of real importance.

\subsection{Examples of non-UI constraints}
\label{sec:examples-of-non-UI-constraints}

Below we define three non-UI constraint types, none of which can be handled by the FPT algorithms known from the literature.
\added{These three types are given to illustrate and evaluate our approach.  
Further research is needed to study applications of these and other types of non-UI constraints.}

\textbf{Super-User At-Least constraint (SUAL).}
Let $T \subseteq S$ be a set of steps, $h$ a constant and $X \subset U$ be a set of users called `super users'.
The SUAL constraint requires that the steps $T$ have to be assigned only to super users if the number of users assigned steps $T$ is less than or equal to $h$.
In other words, at least $h + 1$ users have to be assigned to $T$, or the users have to be super users.

For example, $X$ is the set of senior employees, or employees with a certain training.
If the number of employees assigned steps $T$ is small, then they need to be chosen from $X$.

\textbf{Wang-Li (WL).}
This is a generalisation of Constraint~2 in \cite[Example~1]{WaLi10}, where $|T| = 2$.
Let $T \subseteq S$ be a set of steps and $U_1$, $U_2$, \ldots, $U_d$ non-intersecting sets of users.
The Wang-Li constraint requires that all the steps $T$ are assigned users from $U_i$ for some $i$.

For example, $U_1$, $U_2$, \ldots, $U_d$ are departments in an organisation, and we need to have the entire subflow $T$ to be performed within a single department.

\textbf{Assignment-Dependent Authorisation (ADA).}
Let $U_1$ and $U_2$ be two sets of users, and $s_1$ and $s_2$ two steps.
Then, if $s_1$ is assigned to a user from $U_1$ then $s_2$ has to be assigned to a user from $U_2$.

For example, $U_1$ is the set of users in the defence department.
Then, if anyone from $U_1$ is assigned to $s_1$ then someone from the security department needs to perform $s_2$.

\bigskip

\added{Note that the above constraints are all non-UI.
Indeed, it is easy to find examples of each of the above three constraint types, such that some plan $\pi$ satisfies the constraint but a certain permutation of users in $\pi$ will not satisfy the constraint.
For example, consider a WSP instance with a SUAL constraint and a valid plan $\pi$ that assigns at most $h$ users to $T$ and all those users are from $X$.  If we replace one of those users with a user not in $X$ then the plan will not satisfy the constraint.}

\subsection{WSP with context-dependent authorisations}
\label{sec:wsp-cda}

A careful consideration of Algorithm~\ref{alg:PBA} shows that the algorithm may remain FPT even if we extend WSP allowing certain flexibility of the authorisations.
We will use the term \emph{context-dependent authorisations} (CDAs) for such authorisations, and WSP-CDA for the WSP extended with CDAs.

Formally, CDAs are defined as $\cA = \{ \cA_p :\ p \in P(S) \}$.
Each $\cA_p$ is a {``disjunctive''} set of authorisation functions $\{A_{p,1},\dots ,A_{p,d_p}\}$ ($d_p = |\cA_p|$ and each $A_{p,i}$ maps $S$ to $2^U$). 
We will call $\cA_p$ a \emph{$p$-authorisation family} and $\cA$ an \emph{authorisation family}.
In cases where $\cA_p$ is the same for all $p \in P(S)$, we will omit subscript $p$: $\cA = \{ A_1, A_2, \ldots, A_d \}$.
For a list of all the authorisation-related notations, see Table~\ref{tab:authorisation-notations}.

\begin{table}[htb]
\renewcommand{\arraystretch}{1.3}
\begin{center}
\begin{tabular}{@{}p{7em}lp{10em}@{}}
\toprule
Authorisation list & $A(s) \subseteq U$ & Set of users allowed to perform step $s$ \\
Authorisation function & $A : S \rightarrow 2^U$ & Authorisation lists for every step \\
$p$-authorisation family & $\cA_p = \{ A_{p,1}, \ldots, A_{p,d_p} \}$ & A set of permitted authorisation functions given pattern $p$ \\
Authorisation family & $\cA = \{ \cA_p : p \in P \}$ & A set of $p$-authori\-sa\-tion families for every pattern\\
\bottomrule
\end{tabular}
\end{center}

\caption{Authorisation-related notations.}
\label{tab:authorisation-notations}
\end{table}

A plan $\pi$ is \emph{authorised} for a WSP-CDA instance $(S, U, \cA, C)$ if and only if it is authorised for at least one of the authorisation functions in $\cA_p$, where $p$ is the pattern of plan $\pi$.
Similarly, a pattern $p$ is \emph{authorised} for a WSP-CDA instance $(S,U,C,\cA)$ if it is authorised for at least one of the authorisation functions in $\cA_p$. 

As an example, consider the instance of WSP given in Figure~\ref{fig:workflow-example}. 
Recall that it has $S = \{ s_1, \dots, s_6 \}$, $U = \{u_1, \dots, u_8\}$, and five constraints 
$$
(s_1,s_2,\ne), (s_1,s_4,\ne), (s_3,s_5,\ne), (s_4,s_6,\ne), (s_1,s_3,=).
$$ 
Let us modify the WSP instance by letting $\cA=\{ A_1, A_2 \}$ as follows:
\begin{align*}
& A_1(s_2) = \{u_2\} && A_2(s_2) = \{u_3\} \\
& A_1(s_6) = \{u_5, u_6\} && A_2(s_6) = \{u_5, u_6, u_7\} \\
& A_1(s_i) = U \text{ for } i = 1, 3, 4, 5 && A_2(s_i) = U \text{ for } i = 1, 3, 4, 5
\end{align*}
These CDAs describe the following logic.
If user $u_2$ is assigned to approve the purchase order (step $s_2$) then only users $u_5$ and $u_6$ are allowed to approve payment (step $s_6$); however if user $u_3$ is assigned to approve the purchase order, user $u_7$ is also allowed to approve the payment.
It is not hard to see that plan~(\ref{eq:plan-example}) satisfies $A_1$ but does not satisfy $A_2$.
Since~(\ref{eq:plan-example}) satisfies at least one authorisation family, it is an authorised plan for our WSP-CDA instance.

Consider Algorithm~\ref{alg:PDABA}, which is 
a straightforward modification of Algorithm~\ref{alg:PBA}, which is also designed for UI constraints but can handle CDAs\@.
Here $G_{p,i}$ is the bipartite graph corresponding to the authorisation function $A_{p,i}$.
It is not hard to see that Algorithm~\ref{alg:PDABA} 
is still efficient as long as the $p$-authorisation families are relatively small.
We will discuss the complexity of WSP-CDA in more detail in Section~\ref{sec:absorption}.

\begin{algorithm}[tb]
\caption{Pattern Basic Algorithm (WSP-CDA)}
\label{alg:PDABA}

\Input {WSP-CDA instance $W = (S, U, \cA, C)$, where $\cA = \{ \cA_p :\ p \in P(S) \}$ and $\cA_p = \set{A_{p,1}, \ldots, A_{p,d_p}}$ for each $p \in P(S).$}
\Output {Valid plan $\pi$ or UNSAT}
\For {$p\in P(S)$ and $i \in [d_p]$}
{
Given $p$, $U$ and $A_{p,i}$, compute $G_{p,i}$\; 
Compute a maximum matching $M$ in $G_{p,i}$\;
\If {$p$ is eligible and $|M|=|p|$}
  {
    \Return {plan $\pi$ corresponding to $M$}\;
  }
}
\Return {UNSAT}  
\end{algorithm}

\subsection{Absorption of non-UI constraints}
\label{sec:absorption}

While the WSP-CDA seems to be of interest in its own right, the main reason for introducing CDAs is to efficiently handle non-UI constraints.
We will show below that we can absorb \emph{all} non-UI constraints into CDAs\@. 
The absorption may increase the number of authorisation functions in the $p$-authorisation families.
We introduce the notion of a \emph{branching factor} of a constraint as a measure of how much does its absorption increase the size of the $p$-authorisation families.
Our absorption is efficient for constraints of small branching factor, especially those of branching factor~1. 
In fact, constraints of branching factor~1 are direct generalisations of UI constraints, see Proposition~\ref{th:UIbranch} below.

To link CDAs with practical constraints, we need to introduce the concepts of authorisation families intersection and plan-equivalent instances.
Let $$\cA'=\set{\cA'_p:\ p\in P(S)} \mbox{ and } \cA''=\set{\cA''_p:\ p\in P(S)}$$
be a pair of authorisation families, where each $$\cA'_p=\set{A'_{p,1},\dots ,A'_{p,d'_p}} \mbox{ and } \cA''_p=\set{A''_{p,1},\dots ,A''_{p,d''_p}}$$ is a $p$-authorisation family. 
Then we say that the \emph{intersection} $\cA:=\cA'\cap \cA''$ is an authorisation family such that  $$\cA_p=\set{A'_{p,i}\cap A''_{p,j}:\ i\in [d'_p], j\in [d''_p]},$$ where each
$A'_{p,i}\cap A''_{p,j}$ is an authorisation function such that $$A'_{p,i}\cap A''_{p,j}(s):=A'_{p,i}(s)\cap A''_{p,j}(s).$$ 

A pair $(S,U,C',{\cA}')$ and $(S,U,C'',{\cA}'')$ of WSP-CDA instances are called
\emph{plan-equivalent} if for every plan $\pi: S\rightarrow U,$  $\pi$ is valid for one of them if and only if it is valid for the other. 

A constraint $c$ is called {\em $m$-branching} if there is an authorisation family ${\cA}^{(c)}=\set{{\cA}^{(c)}_p:\ p\in P(S)},$ where each $${\cA}^{(c)}_p= \set{A^{(c)}_{p,1},\dots ,A^{(c)}_{p,m_p}}$$
is a $p$-authorisation family with $m_p\le m$, such that every WSP-CDA instance $(S,U,C,\cA)$ with $c\in C$ is plan-equivalent to the WSP-CDA instance $(S,U,C\setminus\{c\},\cA\cap {\cA}^{(c)}).$
We call ${\cA}^{(c)}$ an {\em authorisation family} of $c.$ Let $m(c,{\cA}^{(c)}):=\max_{p\in P(S)}m_p.$ 
The {\em branching factor} $m(c)$ of $c$ is the minimum of $m(c,{\cA}^{(c)})$ over all authorisation families ${\cA}^{(c)}$ of $c$.
We have the following:

\begin{proposition}
\label{th:mbranch}
Every WSP constraint $c$ is $m$-branching for some $m$.
\end{proposition}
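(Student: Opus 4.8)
The plan is to exhibit, for an arbitrary constraint $c = (T, \Theta_c)$, an explicit authorisation family $\cA^{(c)}$ whose size is bounded by the finite quantity $|\Theta_c|$, and then to check that intersecting $\cA^{(c)}$ with any authorisation family exactly absorbs $c$. The starting observation is that, by the formal definition of a constraint, a plan $\pi$ satisfies $c$ if and only if its restriction $\pi|_T$ equals some $\theta \in \Theta_c$; and since $T$ and $U$ are finite, $\Theta_c$ is a finite set of functions $T \to U$, with $|\Theta_c| \le n^{|T|}$. Thus finiteness of the target $m$ will come for free once the construction is in place.

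First I would define $\cA^{(c)}$ to be pattern-independent: for every $p \in P(S)$ set $\cA^{(c)}_p = \{ A^{(c)}_\theta :\ \theta \in \Theta_c \}$ (relabelled by integers $1,\dots,|\Theta_c|$ to match the definition), where each $A^{(c)}_\theta$ is the authorisation function given by $A^{(c)}_\theta(s) = \{\theta(s)\}$ for $s \in T$ and $A^{(c)}_\theta(s) = U$ for $s \in S \setminus T$. The purpose of this definition is the biconditional it yields: a plan $\pi$ is authorised for $A^{(c)}_\theta$ precisely when $\pi(s) = \theta(s)$ on $T$ and $\pi$ is unconstrained off $T$, so $\pi$ is authorised for some member of $\cA^{(c)}_p$ if and only if $\pi|_T \in \Theta_c$, i.e.\ if and only if $\pi$ satisfies $c$. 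Crucially this holds for every plan irrespective of its pattern, which is exactly why taking $\cA^{(c)}_p$ identical for all $p$ does no harm.

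Next I would verify plan-equivalence of $(S,U,C,\cA)$ and $(S,U,C\setminus\{c\},\cA\cap\cA^{(c)})$ for an arbitrary instance with $c \in C$. The key algebraic fact is that a plan $\pi$ of pattern $p$ is authorised for the intersection family if and only if it is simultaneously authorised for some $A_{p,i} \in \cA_p$ and for some $A^{(c)}_\theta \in \cA^{(c)}_p$, because $\pi(s) \in A_{p,i}(s)\cap A^{(c)}_\theta(s)$ for all $s$ is just the conjunction of the two membership conditions, and we range over all pairs $(i,\theta)$. Combining this decomposition with the biconditional from the previous step, and with the fact that satisfying all of $C$ means satisfying all of $C\setminus\{c\}$ and satisfying $c$, the validity of $\pi$ for the two instances reduces to literally the same conjunction. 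Hence the instances are plan-equivalent, and $c$ is $m$-branching with $m = \max_{p} m_p = |\Theta_c| \le n^{|T|}$, a finite quantity.

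I expect the only delicate step to be the plan-equivalence verification, and in particular pinning down that authorisation under the intersection family decomposes as authorisation under $\cA$ together with authorisation under $\cA^{(c)}$; once that decomposition is established the remainder is a short Boolean manipulation. A secondary point I would state carefully is that $\cA^{(c)}$ is built with no reference to the other constraints in $C$ or to the ambient authorisations $\cA$, so the \emph{same} family absorbs $c$ uniformly across every instance containing $c$, exactly as the definition of $m$-branching demands. (The degenerate case $\Theta_c=\emptyset$, where $c$ is unsatisfiable and $\cA^{(c)}_p$ is empty for all $p$, is consistent: no plan is then authorised, matching the fact that no plan satisfies $c$.)
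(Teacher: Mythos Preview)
Your proof is correct. It differs from the paper's (two-line) argument in the choice of authorisation family: the paper, for each pattern $p$, takes $\cA^{(c)}_p$ to consist of one singleton-valued authorisation function $A^{(c)}_{p,\pi}(s)=\{\pi(s)\}$ per plan $\pi$ of pattern $p$ (implicitly, per such plan satisfying $c$), whereas you enumerate only the partial assignments $\theta \in \Theta_c$ on the scope $T$, set $A^{(c)}_\theta(s)=U$ off $T$, and use the \emph{same} family for every pattern. Both routes establish finiteness, but yours yields the sharper explicit bound $m \le |\Theta_c| \le n^{|T|}$ and is in fact the construction the paper deploys later, in Lemma~\ref{th:t-steps}, to bound the branching factor by $\cO(n^t)$ for a scope of size $t$. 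Your verification of plan-equivalence, via the decomposition of authorisation under the intersection family into the conjunction ``authorised for $\cA$'' and ``authorised for $\cA^{(c)}$'', is also more explicit than what the paper writes out. In short: same underlying idea (exhaust over satisfying assignments and absorb), but your version is tighter and anticipates the subsequent lemma.
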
 
\begin{proof}
Consider  ${\cA}^{(c)}_p=\set{A^{(c)}_{p,\pi}:\ \pi \in \Pi_p},$ where $\Pi_p$ 
is the set of all plans with pattern $p$ and $A^{(c)}_{p,\pi}(s)=\pi(s)$ for every $s\in S.$ We can set $m=\max_{p\in P(S)} |{\cA}^{(c)}_p|.$
\end{proof}

The above proposition shows that every WSP constraint, even if it is a non-UI constraint, can be absorbed into CDAs.
For some WSP constraints, though, the branching factor will be impractically large.
Remarks~\ref{rem:SUAL-branching-factor},~\ref{rem:WL-branching-factor} and~\ref{rem:ADA-branchng-factor} below demonstrate that the non-UI constraints introduced in Section~\ref{sec:examples-of-non-UI-constraints} have relatively small branching factors.


\begin{proposition}
\label{rem:SUAL-branching-factor}
Every SUAL constraint $c$ has branching factor 1.
\end{proposition}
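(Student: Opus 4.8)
The plan is to exhibit, for each pattern $p \in P(S)$, a single authorisation function $A^{(c)}_{p,1}$ (so that $m_p = 1$ for every $p$, giving branching factor $1$) that captures exactly the SUAL condition on plans with pattern $p$. The key observation is that, for a plan $\pi$ with pattern $p$, the number of distinct users assigned to the steps of $T$ depends on $p$ alone: since each block of $p$ receives a distinct user while all steps in a block receive the same user, this number equals $n_T(p)$, the number of blocks of $p$ that intersect $T$. Consequently, whether the first disjunct of SUAL (``at least $h+1$ users are assigned to $T$'') holds is fixed once $p$ is fixed, which is precisely what lets us avoid any branching.

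Concretely, for each pattern $p$ I would define $A^{(c)}_{p,1}$ as follows. If $n_T(p) \ge h+1$, set $A^{(c)}_{p,1}(s) = U$ for all $s \in S$; every plan with pattern $p$ then already assigns at least $h+1$ users to $T$ and hence satisfies $c$ unconditionally. If instead $n_T(p) \le h$, set $A^{(c)}_{p,1}(s) = X$ for $s \in T$ and $A^{(c)}_{p,1}(s) = U$ for $s \notin T$; here the first disjunct of SUAL fails for every plan with pattern $p$, so $c$ is satisfied exactly when all steps of $T$ are assigned super users. In both cases $\cA^{(c)}_p = \{A^{(c)}_{p,1}\}$ is a singleton, so $m(c,\cA^{(c)}) = 1$.

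The central claim to verify is then: a plan $\pi$ with pattern $p$ satisfies $c$ if and only if $\pi$ is authorised for $A^{(c)}_{p,1}$, i.e.\ $\pi(s) \in A^{(c)}_{p,1}(s)$ for all $s \in S$. This follows immediately by inspecting the two cases above. Given this equivalence, plan-equivalence of $(S,U,C,\cA)$ and $(S,U,C\setminus\{c\}, \cA\cap\cA^{(c)})$ drops out by unwinding the definition of the intersection. For a plan $\pi$ of pattern $p$, being authorised for $(\cA\cap\cA^{(c)})_p = \{A_{p,i}\cap A^{(c)}_{p,1} :\ i \in [d_p]\}$ means $\pi(s) \in A_{p,i}(s) \cap A^{(c)}_{p,1}(s)$ for all $s$ and some $i$; since the restriction imposed by $A^{(c)}_{p,1}$ is independent of $i$, this factors into ``$\pi$ is authorised for $\cA$'' together with ``$\pi(s) \in A^{(c)}_{p,1}(s)$ for all $s$'', and the latter is, by the claim, exactly satisfaction of $c$. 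Hence $\pi$ is valid for $(S,U,C,\cA)$ (authorised for $\cA$, satisfies $C\setminus\{c\}$, and satisfies $c$) if and only if it is valid for $(S,U,C\setminus\{c\}, \cA\cap\cA^{(c)})$; eligibility with respect to every other constraint in $C$ is untouched.

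I expect the only real subtlety to lie in the opening observation that $n_T(p)$, and therefore the truth of the ``$\ge h+1$'' disjunct, is a function of the pattern alone. Once that is pinned down, the disjunction in SUAL collapses pattern-by-pattern into a plain authorisation restriction expressible by a single authorisation function, and branching factor $1$ falls out. The remaining steps are routine verifications against the definitions of authorisation-family intersection and plan-equivalence.
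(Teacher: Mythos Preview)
Your proof is correct and follows essentially the same approach as the paper: define a single authorisation function per pattern by casing on whether the number of blocks meeting $T$ exceeds $h$, and set the function to $U$ everywhere in the first case and to $X$ on $T$ (and $U$ elsewhere) in the second. Your write-up is in fact slightly more careful than the paper's, both in restricting to $X$ only on $T$ rather than on all of $S$, and in explicitly verifying plan-equivalence via the intersection.
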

\begin{proof}
We can define a $p$-authorisation family for $c$ as follows: $\cA^{(c)}_p=\set{A^{(c)}_p}$ such that for every $s \in S$,
$A^{(c)}_p(s) = X$ if $p$ has at most $h$ blocks containing all steps of $T$ (and possibly other steps)
and $A^{(c)}_p(s) = U$, otherwise.  
Note that $m(c) = 1$, but $c$ is not UI (see Proposition~\ref{th:UIbranch}). 
Indeed, already for a set $X$ of size 1, $c$ has dependence on users. 
\end{proof}

\begin{proposition}
\label{rem:WL-branching-factor}
Every WL constraint $c$ has branching factor at most $d$, and this is a sharp bound.
\end{proposition}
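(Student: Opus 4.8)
The plan is to prove the two inequalities $m(c) \le d$ and $m(c) \ge d$ (the latter for a suitably chosen WL constraint) separately. For the upper bound I would exhibit a single pattern-independent authorisation family. Given a WL constraint $c$ with scope $T$ and pairwise disjoint user sets $U_1, \dots, U_d$, set $\cA^{(c)}_p = \{A^{(c)}_{p,1}, \dots, A^{(c)}_{p,d}\}$ for every pattern $p$, where $A^{(c)}_{p,i}(s) = U_i$ if $s \in T$ and $A^{(c)}_{p,i}(s) = U$ otherwise. A plan $\pi$ is authorised for $A^{(c)}_{p,i}$ exactly when $\pi(t) \in U_i$ for all $t \in T$, so it is authorised for $\cA^{(c)}_p$ iff $\pi(t) \in U_i$ for all $t \in T$ for some $i$, which is precisely the condition that $\pi$ satisfies $c$. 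Since a plan is authorised for an intersection $A'\cap A''$ iff it is authorised for both $A'$ and $A''$ (immediate from the definition of $\cA'\cap\cA''$), authorisation for $\cA\cap\cA^{(c)}$ is equivalent to the conjunction of ``authorised for $\cA$'' and ``$\pi$ satisfies $c$'', which is exactly what plan-equivalence of $(S,U,C,\cA)$ and $(S,U,C\setminus\{c\},\cA\cap\cA^{(c)})$ requires. As each $\cA^{(c)}_p$ has $d$ functions, this gives $m(c)\le d$.

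For sharpness I would first record the characterization that an authorisation family absorbs $c$ if and only if, for every pattern $p$ and every plan $\pi$ of pattern $p$, one has $\pi$ satisfies $c$ iff $\pi$ is authorised for $\cA^{(c)}_p$; this follows by specialising the plan-equivalence requirement to $C=\{c\}$ and the all-permissive family (every $A(s)=U$). I then construct an explicit WL constraint of branching factor $d$: take $S = T = \{s_1,s_2\}$ and $U_i = \{a_i,b_i\}$ with the $2d$ users $a_1,b_1,\dots,a_d,b_d$ all distinct. I focus on the pattern $p^* = \{\{s_1\},\{s_2\}\}$, whose plans are exactly the ordered pairs $(\pi(s_1),\pi(s_2))$ with $\pi(s_1)\ne\pi(s_2)$. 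Such a plan satisfies $c$ iff $\{\pi(s_1),\pi(s_2)\} = U_i$ for some $i$, i.e.\ iff $(\pi(s_1),\pi(s_2))$ lies in $R = \{(a_i,b_i),(b_i,a_i):i\in[d]\}$.

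Each authorisation function $A^{(c)}_{p^*,j}$ contributes the combinatorial rectangle $A^{(c)}_{p^*,j}(s_1)\times A^{(c)}_{p^*,j}(s_2)$ of authorised pairs, and by the characterization the union of these rectangles, intersected with the off-diagonal, must equal $R$. The core of the argument is of fooling-set type: if one rectangle contained two pairs $(a_i,b_i)$ and $(a_{i'},b_{i'})$ with $i\ne i'$, then it would also contain the cross pair $(a_i,b_{i'})$, which is off-diagonal (all users distinct) yet not in $R$ (since $a_i\in U_i$ and $b_{i'}\in U_{i'}$ lie in different sets), contradicting absorption. Hence each rectangle covers at most one of the $d$ pairs $(a_1,b_1),\dots,(a_d,b_d)$, so at least $d$ functions are needed for $p^*$, giving $m(c)\ge d$. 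With the upper bound this yields branching factor exactly $d$, proving the bound sharp.

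The main obstacle I anticipate lies in the lower bound, in two places requiring care. First, one must justify the clean ``satisfies iff authorised'' characterization, which relies on the plan-equivalence holding for \emph{all} ambient families $\cA$ and not merely a convenient one. Second, the rectangle/fooling argument must be airtight about the off-diagonal restriction: a rectangle is only constrained to agree with $R$ on off-diagonal pairs, so one has to verify that the forbidden cross pair $(a_i,b_{i'})$ is genuinely off-diagonal and genuinely excluded from $R$ before deriving the contradiction. The upper bound, by contrast, is routine once the intersection-authorisation identity is in hand.
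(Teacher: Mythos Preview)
Your proposal is correct and follows essentially the same approach as the paper: the upper-bound construction is identical, and the sharpness example (two steps, $d$ disjoint two-element user groups, focus on the all-singletons pattern) is the same. Your lower-bound argument, phrased in terms of combinatorial rectangles and a fooling-set pigeonhole on the pairs $(a_i,b_i)$, is a slightly cleaner rendering of the paper's counting argument (which instead observes that with fewer than $d$ functions some $A_t$ must authorise more than two plans and hence mix user groups), but the underlying idea is the same.
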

\begin{proof}
Define an authorisation family for $c$ as follows:
$$
\cA^{(c)} = \set{A^{(c)}_{1}, \dots, A^{(c)}_{d}},
$$ 
where for $j \in [d]$, $A^{(c)}_{j}(t) = U_j$ for all $t \in T$ and $A^{(c)}_{j}(s) = U$ for all $s \in S\setminus T$\@. 
Thus, $m(c) \le d$.

To prove that this is a sharp bound, consider a WSP-CDA instance $(S,U,\{A\},\{c\}),$ where 
$S = \{ s_1, s_2 \}$, $U = \{ u_i : i = 1, 2, \ldots, 2d\}$, $c$ is a WL constraint with $T=S$ and 
$U_i = \{ u_{2i-1}, u_{2i} \}$ for $i = 1, 2, \dots, d$, and $A(s_1)=A(s_2)=U.$

Let us assume that $m(c) < d$ i.e.\ that there exists an authorisation family $\cA^{(c)} = \{ \cA^{(c)}_p : p \in P(S) \}$ such that $|\cA^{(c)}_p| < d$ for every $p \in P(S)$
and  $(S,U,\{A\},\{c\})$ is plan-equivalent to $(S,U,\cA,\emptyset)$, where $\cA = \cA^{(c)} \cap \{ A \}$.
Observe that, according to our assumption, $|\cA_p| < d$ for every $p \in P(S)$.

Consider the pattern $p = \{ \{s_1\}, \{s_2\} \}$, which means that the two steps have to be assigned to distinct users.
For every $i \in [d]$, there are exactly two such plans, $\pi'_i$ and $\pi''_i$, where $\pi'_i(s_1) = u_{2i-1}$, $\pi'_i(s_2) = u_{2i}$, $\pi''_i(s_1) = u_{2i}$ and $\pi''_i(s_2) = u_{2i-1}$.
Then the $p$-authorisation family $\cA_p$ has to allow exactly these $2d$ plans.
Let $\cA_p = \{ A_1, A_2, \ldots, A_r \}$.
Since $r < d$, there exists an authorisation function $A_t$, $t \in [r]$, that allows more than two plans.
Since there can be only two plans that involve the users from a particular user group $U_i$, we conclude that, for some $s \in T$, there exist $u_i, u_j \in U$ such that they belong to different user groups and $\{ u_i, u_j \} \in A_t(s)$.
Let $s'$ be the other step, i.e. $\{ s, s'\} = T$.
Note that $A_t(s')$ has to be non-empty and let $u \in A_t(s')$.
Then the following two plans $\pi$ and $\pi^*$ are authorised by $A_t$: $\pi(s') = u$ and $\pi(s) = u_i$, and $\pi^*(s') = u$ and $\pi^*(s) = u_j$.
Since $u_i$ and $u_j$ belong to different user groups, at least one of the plans $\pi$ and $\pi^*$ assigns users from different user groups to the steps of $T$.
This is a contradiction, hence our assumption is wrong.
\end{proof}

\begin{proposition}
\label{rem:ADA-branchng-factor}
Every ADA constraint $c$ has branching factor at most 2, and this is a sharp bound.
\end{proposition}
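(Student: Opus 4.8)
The plan is to follow the two-part template of Proposition~\ref{rem:WL-branching-factor}: first exhibit an authorisation family of $c$ with at most two functions per pattern (giving $m(c)\le 2$), then construct a single hard instance on which one function provably cannot suffice (giving sharpness).

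For the upper bound, I would record that a plan $\pi$ satisfies the ADA constraint $c$ precisely when $\pi(s_1)\notin U_1$ or $\pi(s_2)\in U_2$. This is a disjunction of two conditions, each of which is exactly the kind of constraint a single authorisation function can encode. Accordingly I would define a pattern-independent authorisation family $\cA^{(c)}=\set{A^{(c)}_1,A^{(c)}_2}$ by
\[
A^{(c)}_1(s_1)=U\setminus U_1,\qquad A^{(c)}_2(s_2)=U_2,
\]
with $A^{(c)}_1(s)=A^{(c)}_2(s)=U$ for all remaining steps. Then $\pi$ is authorised by $A^{(c)}_1$ iff $\pi(s_1)\notin U_1$, and by $A^{(c)}_2$ iff $\pi(s_2)\in U_2$, so $\pi$ is authorised by $\cA^{(c)}$ (i.e.\ by at least one member) exactly when $\pi$ satisfies $c$, for every pattern. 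The routine step that remains is to confirm plan-equivalence of $(S,U,C,\cA)$ and $(S,U,C\setminus\set{c},\cA\cap\cA^{(c)})$: expanding the intersection shows that a plan with pattern $p$ is authorised for $\cA\cap\cA^{(c)}$ iff it is authorised for $\cA_p$ and satisfies $c$, and combining ``satisfies $c$'' with eligibility for $C\setminus\set{c}$ yields eligibility for $C$. Hence $m(c)\le 2$.

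For sharpness I would build a minimal instance forcing two functions. Take $S=\set{s_1,s_2}$, $U=\set{1,2,3,4}$, $U_1=\set{1}$, $U_2=\set{4}$, and work with the pattern $p=\set{\set{s_1},\set{s_2}}$ that assigns distinct users. As in the WL proof, the trivial authorisation $A$ with $A(s)=U$ reduces the branching-factor-$1$ hypothesis to the existence of a single authorisation function $A^{(c)}_p$ whose authorised plans of pattern $p$ are exactly the pattern-$p$ plans satisfying $c$. The plans $\pi=(1,4)$ (satisfying $c$ because $4\in U_2$) and $\pi'=(2,3)$ (satisfying $c$ because $2\notin U_1$) both have pattern $p$, so necessarily $A^{(c)}_p(s_1)\supseteq\set{1,2}$ and $A^{(c)}_p(s_2)\supseteq\set{3,4}$. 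But then $A^{(c)}_p$ also authorises the plan $(1,3)$, which has pattern $p$ yet violates $c$ (as $1\in U_1$ and $3\notin U_2$), a contradiction. Therefore $m(c)>1$, and with the upper bound $m(c)=2$.

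The only real subtlety I anticipate is in the sharpness direction: the user sets must be chosen so that the ``mixed'' plan $(1,3)$ witnesses a violation while still lying in the same pattern $p$, which requires all three plans $\pi,\pi',(1,3)$ to use distinct users; the assignment above is engineered so that $\pi$ satisfies $c$ through its second disjunct and $\pi'$ through its first, forcing the mix to fail both. I would also take care to justify the reduction from the full definition (plan-equivalence for all instances) to this single-instance ``rectangle'' argument, exactly as is implicitly done in the proof of Proposition~\ref{rem:WL-branching-factor}.
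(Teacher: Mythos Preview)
Your proposal is correct and follows essentially the same approach as the paper: a two-function, pattern-independent authorisation family encodes the disjunction $\pi(s_1)\notin U_1$ or $\pi(s_2)\in U_2$, and sharpness is obtained by a rectangle argument on the pattern $\set{\set{s_1},\set{s_2}}$. The only cosmetic differences are that the paper uses the pair $A_1(s_1)=U_1,\ A_1(s_2)=U_2$ and $A_2(s_1)=U\setminus U_1$ (a case split rather than the raw disjunction) and establishes sharpness with three users $\set{u_1,u_2,u_3}$, $U_1=\set{u_1}$, $U_2=\set{u_2}$, via the plans $(u_1,u_2)$, $(u_2,u_3)$ and the forbidden $(u_1,u_3)$; your four-user instance works for the same reason.
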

\begin{proof}
We can define $\cA^{(c)} = \{ A_1, A_2 \}$ as follows:
$$
A_1(s_1) = U_1,\ A_1(s_2) = U_2,\ A_1(s) = U \text{ for } s \notin \{ s_1, s_2 \}
$$
$$
A_2(s_1) = U \setminus U_1,\ A_2(s) = U \text{ for } s \neq s_1
$$

To see that it is a sharp bound, consider a WSP-CDA instance $(S,U,\{A\},\{c\}),$ where $S=\{s_1,s_2\},$ $U = \{ u_1, u_2, u_3 \}$, $c$ is an ADA constraint with $T=S,$ $U_1 = \{ u_1 \}$ and $U_2 = \{ u_2 \}$,
and $A(s_i)=U$ for $i=1,2.$ 
Assume that $m(c) = 1$ i.e. there exists $\cA^{(c)} = \{ \cA^{(c)}_p : p \in P(S) \}$ such that $\cA^{(c)}_p = \{ A_p \}$ for each $p \in P(S)$ and $(S,U,\{A\},\{c\})$ is plan-equivalent to $(S,U,\cA^{(c)},\emptyset).$ 

Consider a pattern $p = \{ \{s_1\}, \{s_2\} \}$ which assigns different users to the steps.
Then a plan $\pi(s_1) = u_1$, $\pi(s_2) = u_2$ is authorised by $A^{(c)}_p$.
Also, a plan $\pi'(s_1) = u_2$, $\pi'(s_2) = u_3$ is authorised by $A^{(c)}_p$.
Hence $\{ u_1, u_2 \} \subseteq A^{(c)}_p(s_1)$ and $\{ u_2, u_3 \} \subseteq A^{(c)}_p(s_2)$.
Then a plan $\pi''(s_1) = u_1$, $\pi''(s_2) = u_3$ is also authorised by $A^{(c)}_p$, however $\pi''$ does not satisfy $c$.
This is a contradiction, hence our assumption is wrong, and $m(c) = 2$.
\end{proof}

The next result shows that constraints of branching factor 1 (including SUAL constraints) generalise UI constraints. 

\begin{proposition}
\label{th:UIbranch}
Every UI constraint is of branching factor 1. 
\end{proposition}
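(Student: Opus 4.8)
The plan is to exploit the defining feature of UI constraints recorded in Remark~\ref{rem1}: whether a plan satisfies a UI constraint $c$ depends only on its pattern, so each pattern $p\in P(S)$ either satisfies $c$ or it does not. This lets me encode $c$ into the authorisations one pattern at a time, using a single authorisation function per pattern, which is exactly what branching factor~$1$ demands. So the whole argument reduces to exhibiting such a family and checking it reproduces $c$.

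First I would construct the authorisation family $\cA^{(c)}=\set{\cA^{(c)}_p:\ p\in P(S)}$ by taking each $\cA^{(c)}_p=\set{A^{(c)}_p}$ to be a singleton (so $m_p=1$ for every $p$), where for all $s\in S$
$$
A^{(c)}_p(s)=\begin{cases} U & \text{if } p \text{ satisfies } c,\\ \emptyset & \text{if } p \text{ does not satisfy } c.\end{cases}
$$
Since each $\cA^{(c)}_p$ has exactly one member, $m(c,\cA^{(c)})=\max_{p}m_p=1$, so it only remains to verify the plan-equivalence required by the definition of $m$-branching.

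Next I would check plan-equivalence of $(S,U,C,\cA)$ and $(S,U,C\setminus\set{c},\cA\cap\cA^{(c)})$ for an arbitrary instance with $c\in C$, fixing a plan $\pi$ with pattern $p$. Because $\cA^{(c)}_p$ is a singleton, $\pi$ is authorised for $\cA\cap\cA^{(c)}$ precisely when some $A_{p,i}\in\cA_p$ satisfies $\pi(s)\in A_{p,i}(s)\cap A^{(c)}_p(s)$ for all $s$. If $p$ satisfies $c$, then $A^{(c)}_p(s)=U$, so the intersection leaves $\cA$ unchanged, and by Remark~\ref{rem1} $\pi$ automatically satisfies $c$; hence ``authorised for $\cA$ and satisfies $c$'' coincides with ``authorised for $\cA\cap\cA^{(c)}$''. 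If $p$ does not satisfy $c$, then $A^{(c)}_p(s)=\emptyset$ makes $\pi$ unauthorised for $\cA\cap\cA^{(c)}$, while Remark~\ref{rem1} forces $\pi$ to violate $c$; so both instances reject $\pi$. As the remaining constraints $C\setminus\set{c}$ are untouched, $\pi$ is valid for one instance if and only if it is valid for the other, establishing plan-equivalence and hence $m(c)\le 1$. Since every $p$-authorisation family is non-empty we have $m_p\ge 1$, so $m(c)\ge 1$, giving $m(c)=1$.

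The only delicate points, rather than genuine obstacles, are that $\cA^{(c)}$ must serve simultaneously for every instance containing $c$ --- which holds because the construction depends only on $c$ (through which patterns satisfy it) and never on the ambient $\cA$ --- and that $\emptyset$ is a legitimate authorisation list (it lies in $2^U$). The real engine of the argument is Remark~\ref{rem1}: for UI constraints eligibility is constant on each pattern, and this is exactly what allows a per-pattern ``all or nothing'' authorisation to reproduce $c$ with a single function per pattern.
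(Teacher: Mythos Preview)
Your proposal is correct and follows essentially the same approach as the paper: define $\cA^{(c)}_p=\{A^{(c)}_p\}$ with $A^{(c)}_p(s)=U$ if $p$ satisfies $c$ and $A^{(c)}_p(s)=\emptyset$ otherwise, invoking Remark~\ref{rem1}. The paper's proof simply states this construction without spelling out the plan-equivalence verification or the lower bound $m(c)\ge 1$, so your version is a strictly more detailed rendering of the same idea.
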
 
\begin{proof}
Let $c$ be a UI constraint and let $P^{(c)}$ be the set of patterns in $P(S)$ which satisfy $c$. 
By Remark \ref{rem1}, we can define a $p$-authorisation family for $c$ as follows: $\cA^{(c)}_p=\set{A^{(c)}_p}$, where for every $s \in S$, we have $A^{(c)}_p(s)=U$ if $p \in P^{(c)}$ and $A^{(c)}_p(s)=\emptyset$, otherwise.
\end{proof} 
 
For the purpose of building an algorithm for WSP-CDA, we can interpret $m$-branching constraints as follows. 
We will say that a constraint $c$ is {\em absorbed} into $\cA,$ i.e., $c$ is removed from $C$ at the cost of replacing $\cA$ by $\cA\cap {\cA}^{(c)}.$ 
The size of $\cA\cap {\cA}^{(c)}$ will depend on the branching factor of $c,$ the best case being $m(c)=1.$ 
Since $(S,U,\cA\cap {\cA}^{(c)},C\setminus \{c\})$ is a WSP-CDA instance, if $(S,U,\cA,C)$ has more than one constraint,  we can similarly absorb all of them into $\cA$ one by one. 
Algorithm~\ref{alg:PDABA} shows that there is no need to absorb UI constraints and so our absorption approach may be efficient if both the number of non-UI constraints and their branching factors are relatively small.
We will formalise this remark as follows.



\begin{theorem}
\label{th:abs1}
Let $W = (S, U, \cA, C)$ be a WSP-CDA instance, where $\ell$ constraints have branching factors between 2 and $m$, and the rest of the constraints have branching factor 1.
Then $W$ can be solved in time $\cO^*(m^{\ell} 2^{k\log k})$. 
\end{theorem}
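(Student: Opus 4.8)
The plan is to absorb every constraint of $C$ into the authorisation family, reducing $W$ to a plan-equivalent \emph{constraint-free} WSP-CDA instance, and then to run Algorithm~\ref{alg:PDABA} on it while tracking how much the absorption inflates the $p$-authorisation families. First I would invoke the definition of an $m$-branching constraint together with the observation made just before the theorem that absorption can be iterated: starting from $(S,U,C,\cA)$ and absorbing the constraints one at a time yields the plan-equivalent instance $(S,U,\emptyset,\cA^{*})$ with $\cA^{*}=\cA\cap\bigcap_{c\in C}\cA^{(c)}$, where each $\cA^{(c)}$ is a witnessing authorisation family of $c$ satisfying $|\cA^{(c)}_p|\le m(c)$ for every $p\in P(S)$. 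Since each single absorption preserves plan-equivalence and plan-equivalence is transitive, $W$ is satisfiable if and only if $(S,U,\emptyset,\cA^{*})$ is, and a valid plan for one yields a valid plan for the other; so it suffices to solve the latter.

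Next I would bound $|\cA^{*}_p|$. Because the intersection of two $p$-authorisation families is formed from all pairwise intersections of their members, $|\cA_p\cap\cA^{(c)}_p|\le|\cA_p|\cdot|\cA^{(c)}_p|$, and hence $|\cA^{*}_p|\le|\cA_p|\prod_{c\in C}|\cA^{(c)}_p|\le|\cA_p|\prod_{c\in C}m(c)$. Each of the $\ell$ constraints of branching factor between $2$ and $m$ contributes a factor at most $m$, while every branching-factor-$1$ constraint (including non-UI ones such as SUAL, by Proposition~\ref{rem:SUAL-branching-factor}) contributes a factor $1$; therefore $|\cA^{*}_p|\le|\cA_p|\,m^{\ell}$ for every pattern $p$. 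I would stress two points here: $|\cA_p|$ is part of the input and thus polynomial in $N$, and $\cA^{*}$ need not be materialised for all (exponentially many) patterns at once, since for a fixed $p$ the family $\cA^{*}_p$ is computed on the fly in polynomial time from $\cA_p$ and the polynomially-evaluable families $\cA^{(c)}_p$.

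Finally I would run Algorithm~\ref{alg:PDABA} on $(S,U,\emptyset,\cA^{*})$. With $C=\emptyset$ the eligibility test is vacuously satisfied, so the algorithm simply searches, over all $p\in P(S)$ and all authorisation functions in $\cA^{*}_p$, for a bipartite graph $G_{p,i}$ admitting a matching saturating $p$; correctness is immediate from the analysis of $G_p$ in Section~\ref{sec:WSPwithUI}, since a plan of pattern $p$ is authorised for some function of $\cA^{*}_p$ exactly when such a saturating matching exists. The number of (pattern, authorisation-function) pairs examined is $\sum_{p\in P(S)}|\cA^{*}_p|\le B_k\cdot(\max_p|\cA_p|)\,m^{\ell}$, and each pair costs only the polynomial-time construction of $G_{p,i}$ and one maximum-matching computation. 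Using $B_k\le 2^{k\log k}$ and absorbing the factor $\max_p|\cA_p|$ together with all construction, matching and bookkeeping costs into the polynomial part of $\cO^{*}$, the total running time is $\cO^{*}(m^{\ell}2^{k\log k})$.

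I expect the main obstacle to be the careful bookkeeping of the second paragraph: arguing that the blow-up is \emph{exactly} the product $\prod_{c}m(c)$ rather than something larger, that the branching-factor-$1$ constraints genuinely cost nothing, and — most delicately — that the residual $|\cA_p|$ factor is legitimately hidden inside $\cO^{*}$ (as a polynomial dependence on the input size $N$) rather than silently contributing to the parameter-dependent part of the bound.
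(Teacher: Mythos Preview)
Your proposal is correct and follows essentially the same route as the paper: absorb the high-branching constraints into the authorisation family, bound the blow-up of each $\cA_p$ by $|\cA_p|\cdot m^{\ell}$, and then run Algorithm~\ref{alg:PDABA}, hiding the $\max_p|\cA_p|$ factor in the polynomial part of $\cO^{*}$.

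There is one small but genuine difference worth flagging. The paper absorbs only the $\ell$ constraints of branching factor at least $2$ and leaves the branching-factor-$1$ constraints in $C$, relying on the eligibility test inside Algorithm~\ref{alg:PDABA}. You instead absorb \emph{all} constraints, obtaining a constraint-free instance with vacuous eligibility. Your variant is arguably cleaner: Algorithm~\ref{alg:PDABA} as written checks pattern-level eligibility, which is well-defined for UI constraints but not directly for arbitrary branching-factor-$1$ constraints (such as SUAL, cf.\ Proposition~\ref{rem:SUAL-branching-factor}); by absorbing those as well---each contributing a factor $1$ to the product---you sidestep this issue at no asymptotic cost. Your on-the-fly computation of $\cA^{*}_p$ per pattern is also a nice touch that the paper leaves implicit.
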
 
\begin{proof} 
Let each $p$-authorisation family $\cA_p$ of $W$ contain at most $d$ authorisations functions for every $p\in P(S).$ 
Let us first absorb the $\ell$ non-UI constraints one by one. 
The resulting WSP-CDA instance $W'$ will have each $p$-authorisation family with at most $dm^{\ell}$  authorisation functions.
Thus, the running time to absorb all non-UI constraints will be $\cO^*({\ell}dm^{\ell} 2^{k \log k})=\cO^*(m^{\ell} 2^{k \log k})$.
Now we can apply Algorithm \ref{alg:PDABA} to $W'$. 
The running time of Algorithm \ref{alg:PDABA} will be $\cO^*(m^{\ell}2^{k\log k})$.
Hence, the total running time
will be $\cO^*(m^{\ell} 2^{k\log k})$.
\end{proof}

Recall that it is widely assumed that $\text{FPT} \ne \text{W[1]}$.

\begin{corollary}
\label{cor:wsp-cda-properties}
Let $W=(S,U,\cA,C)$ be a WSP-CDA instance in which all but $\ell$ constraints are of branching factor 1 and the $\ell$ constraints are of branching factor at least 2 at most $m$.
We have the following:
\begin{enumerate}[(a)]
	\item 
	If $m^\ell = \cO^*(f(k))$ for some computable function $f$ of $k$ only, then $W$ can be solved in FPT time. 
	
	\item
	If either $m=1$ or $\ell$ and $m$ are constants, then $W$ can be solved in time $\cO^*(2^{k\log k})$.

	\item 
	Consider constraints $c_{i,j}$, $1 \le j < i \le k$ defined in Remark~\ref{rem:Wproof}. 
	Unless $\text{FPT} = \text{W[1]}$, there is no computable function $h$ of $k$ only such that $m(c_{i,j}) \le h(k)$ for any $1 \le j < i \le k$.
\end{enumerate}
\end{corollary}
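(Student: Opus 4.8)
The plan is to handle the three parts in increasing order of difficulty, deriving parts (a) and (b) directly from Theorem~\ref{th:abs1} and obtaining part (c) by combining part (a) with the W[1]-hardness recorded in Remark~\ref{rem:Wproof}. For part (a), I would simply substitute the hypothesis into the running time guaranteed by Theorem~\ref{th:abs1}, namely $\cO^*(m^\ell 2^{k\log k})$. If $m^\ell = \cO^*(f(k))$ for a computable $f$, this bound becomes $\cO^*(f(k)\cdot 2^{k\log k})$, which is of the form $\cO^*(g(k))$ for the computable function $g(k)=f(k)\,2^{k\log k}$ of $k$ alone; this is precisely the definition of an FPT running time, so $W$ is FPT. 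For part (b), I would observe that in both stated cases $m^\ell$ collapses to a constant: if $m=1$ then $m^\ell=1$ irrespective of $\ell$, and if $\ell$ and $m$ are both constants then $m^\ell$ is a constant. In either case Theorem~\ref{th:abs1} yields running time $\cO^*(2^{k\log k})$, the constant factor being absorbed by the $\cO^*$ notation.

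For part (c), I would argue by contradiction. Suppose there were a computable function $h$ of $k$ with $m(c_{i,j})\le h(k)$ for every $1\le j<i\le k$. Consider the family of WSP instances of Remark~\ref{rem:Wproof}: each is a quadruple $(S,U,C,A)$ with $A(s)=U$ for all $s\in S$ and $C=\set{c_{i,j}:\ 1\le j<i\le k}$, and this family is W[1]-hard. Viewing such an instance as a WSP-CDA instance with the single-function authorisation family $\cA=\set{A}$, it has at most $\ell=\binom{k}{2}$ constraints of branching factor at least $2$, each of branching factor at most $h(k)$. Hence $m^\ell\le h(k)^{\binom{k}{2}}$, which is a computable function of $k$ only because $\binom{k}{2}$ depends on $k$ alone. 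By part (a) the whole family is then solvable in FPT time, contradicting its W[1]-hardness under the assumption $\text{FPT}\ne\text{W[1]}$.

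The step I expect to require the most care is ensuring that the FPT running time of Theorem~\ref{th:abs1} is genuinely applicable in part (c), rather than merely that a small authorisation family for each $c_{i,j}$ exists. The definition of branching factor only asserts the \emph{existence} of an authorisation family ${\cA}^{(c)}$ of the stated size, whereas the algorithm underlying Theorem~\ref{th:abs1} must actually absorb each constraint within the claimed time budget; verifying that this absorption is carried out (and that it is subsumed by Theorem~\ref{th:abs1}) is the delicate point. Once this is granted, the crucial structural observation is simply that $\ell=\binom{k}{2}$ is a function of $k$ alone, so that $m^\ell$ remains a computable function of $k$ and part (a) can be invoked unchanged.
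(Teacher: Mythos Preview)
Your proposal is correct and follows essentially the same approach as the paper for all three parts. The only minor difference is in part~(c): the paper reads ``for any $1\le j<i\le k$'' as ``for \emph{some} pair $(i,j)$'' and therefore begins by assuming $m(c_{i,j})\le h(k)$ for a single pair, then invokes the symmetry of the constraints $c_{i,j}$ (they are all defined via the same relation $\rho$, hence share the same branching factor) to extend the bound to every pair before applying part~(a); you instead assume the bound for every pair from the outset. The symmetry observation makes the two readings equivalent, so this is not a gap. Your methodological worry about whether the absorbing authorisation families are actually computable within the time budget of Theorem~\ref{th:abs1} is legitimate but is not addressed in the paper either, which simply invokes the theorem directly.
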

\begin{proof}
Claims (a) and (b) immediately follow from Theorem~\ref{th:abs1}. To prove (c), suppose that there is computable function $h$ of $k$ only such that $m(c_{i,j})\le h(k)$ for some $1\le j<i\le k.$
By symmetry of constraints $c_{i,j},$ we have $m(c_{i,j})\le h(k)$ for all $1\le j<i\le k.$ Consider the WSP with $S=\set{s_1,\dots ,s_k}$, $C=\set{c_{i,j}:\ 1\le j<i\le k}$ and $A(s_i)=U$ for every $i=1,\dots ,k$. 
Note that $\ell=|C|={k \choose 2}$.
By (a), the WSP is FPT. However, by Remark \ref{rem:Wproof} the WSP is W[1]-hard. If there is no contradiction, FPT must be equal to W[1]. 
\end{proof}




\subsection{Branching factor of practical non-UI constraints}
\label{sec:complexity-of-practical-non-UI-constraints}

Below we show that the branching factor of any constraint limited to either a fixed number of steps or a fixed number of users is at most polynomial in $k$ and hence can be handled efficiently.

\begin{lemma}
\label{th:t-steps}
A constraint with a scope of size $t$ has a branching factor $\cO(n^t)$.
\end{lemma}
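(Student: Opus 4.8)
The plan is to absorb $c$ directly through its satisfying assignments on the scope $T$. Because $c$ constrains only the images of the $t$ steps in $T$, the set $\Theta_c$ of functions witnessing satisfaction consists of maps $\theta\colon T\to U$, of which there are at most $n^t$. I would therefore build one authorisation function per such $\theta$ and use the same $p$-authorisation family for every pattern $p\in P(S)$.

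Concretely, for each $\theta\in\Theta_c$ define $A^{(c)}_\theta$ by $A^{(c)}_\theta(s)=\set{\theta(s)}$ for $s\in T$ and $A^{(c)}_\theta(s)=U$ for $s\in S\setminus T$, and put $\cA^{(c)}_p=\set{A^{(c)}_\theta:\ \theta\in\Theta_c}$ for all $p$. The key observation is that a plan $\pi$ is authorised by $A^{(c)}_\theta$ exactly when $\pi(t)=\theta(t)$ for every $t\in T$ (the condition on $S\setminus T$ is vacuous). Hence $\pi$ is authorised for the family $\cA^{(c)}$ if and only if some $\theta\in\Theta_c$ agrees with $\pi$ on $T$, which is precisely the defining condition for $\pi$ to satisfy $c$.

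It then remains to check the plan-equivalence demanded by the definition of an $m$-branching constraint. Here I would unfold the definition of the intersection $\cA\cap\cA^{(c)}$: for a plan $\pi$ with pattern $p$, being authorised for $\cA\cap\cA^{(c)}$ means some $A_{p,i}\cap A^{(c)}_\theta$ authorises $\pi$, and since $\pi(s)\in A_{p,i}(s)\cap A^{(c)}_\theta(s)$ for all $s$ is the conjunction of $\pi$ being authorised by $A_{p,i}$ and by $A^{(c)}_\theta$, this factors into ``$\pi$ is authorised for $\cA$'' and ``$\pi$ satisfies $c$''. Combining this with the (unchanged) eligibility requirement for $C\setminus\set{c}$ shows that a plan is valid for $(S,U,C,\cA)$ if and only if it is valid for $(S,U,C\setminus\set{c},\cA\cap\cA^{(c)})$, as required; this holds for every $\cA$ and every $C$ containing $c$.

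Finally the count is immediate: distinct $\theta$ give distinct $A^{(c)}_\theta$, so $\card{\cA^{(c)}_p}=\card{\Theta_c}\le n^t$ for every $p$, whence $m(c)\le\max_{p\in P(S)}\card{\cA^{(c)}_p}\le n^t=\cO(n^t)$. The only point requiring care is the factorisation of the intersection-authorisation condition into the two separate existential statements; the rest is bookkeeping. A harmless edge case is $\Theta_c=\emptyset$, where $\cA^{(c)}_p$ is empty and no plan is authorised, matching the fact that then no plan satisfies $c$.
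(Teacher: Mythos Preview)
Your proof is correct and follows essentially the same approach as the paper: enumerate the satisfying assignments on the scope $T$ (your $\theta\in\Theta_c$, the paper's $\sigma_i$), turn each into an authorisation function that pins down $T$ and leaves $S\setminus T$ free, and bound the count by $n^t$. If anything, you are more careful than the paper, which omits the value of $A_i(s)$ for $s\notin T$ and does not spell out the plan-equivalence verification or the $\Theta_c=\emptyset$ edge case.
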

\begin{proof}
Consider a constraint $c$ with scope $T \subseteq S$, where $|T| = t$.
Note that if $\pi$ and $\pi'$ are two plans, $\pi$ satisfies $c$ and $\pi'(s) = \pi(s)$ for every $s \in T$ then $\pi'$ also satisfies $c$.

Let $\sigma_i : T \rightarrow U$ for $i = 1, 2, \ldots, r$ be all the assignments of steps $T$ to $U$ that satisfy constraint $c$; note that $c$ is not concerned with the assignment of the other steps.
Then the authorisation family of $c$ can be defined as $\cA^{(c)} = \{ A_1, A_2, \ldots, A_r \}$, where $A_i(s) = \{ \sigma_i(s) \}$ for every $s \in T$ and $i\in  [r]$, for every pattern $p \in P$.
Observe that there are $n^t$ assignments of steps of $T$ to the users $U$, hence $r = \cO(n^t)$.
\end{proof}

By following a similar logic, we can arrive to the conclusion that a constraint that involves only $t$ users has a branching factor $O(t^k)$ i.e.\ it is exponential in $k$.
However, the next lemma shows that we can achieve a better lower bound on the branching factor of such a constraint.

\begin{lemma}
\label{th:t-users}
Any constraint that involves only $t$ users in its definition has a branching factor $\cO((k+1)^t)$.
\end{lemma}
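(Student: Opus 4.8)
The plan is to mimic the structure of the proof of Lemma~\ref{th:t-steps}, but to enumerate over the possible \emph{traces} of a plan on the $t$ special users instead of over assignments of a bounded scope. Let $Y\subseteq U$ be the set of $t$ users appearing in the definition of $c$. The first step is to make precise the sense in which $c$ ``involves only'' these users: I would read this as saying that $c$ is invariant under every permutation $f$ of $U$ fixing $Y$ pointwise, \ie for every plan $\pi$, $\pi$ satisfies $c$ if and only if $f\circ\pi$ does. With $t=0$ this is exactly the UI property, and the bound $(k+1)^0=1$ recovers Proposition~\ref{th:UIbranch}, which is a useful sanity check.

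The key structural fact I would establish first is that, for a plan $\pi$ with pattern $p$, whether $\pi$ satisfies $c$ depends only on $p$ together with the partial map $\gamma_\pi$ from $Y$ to the blocks of $p$ that records, for each $y\in Y$, which block (if any) is assigned to $y$. Indeed, if two plans $\pi,\pi'$ share the pattern $p$ and the trace $\gamma_\pi=\gamma_{\pi'}$, then they agree on every block assigned a special user and differ only by a relabelling of the non-special users on the remaining blocks; since both such blocks use users of $U\setminus Y$ and each plan is injective across blocks, this relabelling extends to a permutation of $U$ fixing $Y$, so by invariance $\pi$ and $\pi'$ agree on satisfaction of $c$. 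This lets me call a trace $\gamma$ \emph{satisfying} when all plans with pattern $p$ and trace $\gamma$ satisfy $c$.

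With this in hand the construction of $\cA^{(c)}$ is direct. For each pattern $p$ and each \emph{realisable} (injective) satisfying trace $\gamma$, I would add one authorisation function $A^{(c)}_{p,\gamma}$ to $\cA^{(c)}_p$, defined by $A^{(c)}_{p,\gamma}(s)=\{y\}$ for all $s$ in the block $\gamma(y)$ assigned a special user $y$, and $A^{(c)}_{p,\gamma}(s)=U\setminus Y$ for all $s$ in every other block. A plan $\pi$ with pattern $p$ is then authorised for $A^{(c)}_{p,\gamma}$ exactly when its trace equals $\gamma$: the singleton lists pin each special user to its designated block, while the $U\setminus Y$ lists forbid special users elsewhere (this is what stops an ``unassigned'' special user from sneaking onto a non-special block). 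Hence $\pi$ satisfies $c$ iff $\gamma_\pi$ is a satisfying trace iff $\pi$ is authorised for some function in $\cA^{(c)}_p$; since a plan is authorised for $\cA\cap\cA^{(c)}$ precisely when it is authorised for some function of $\cA$ and some function of $\cA^{(c)}$ for its pattern (immediate from the definition of the intersection in Section~\ref{sec:absorption}), this yields the required plan-equivalence between $(S,U,C,\cA)$ and $(S,U,C\setminus\{c\},\cA\cap\cA^{(c)})$.

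Finally I would bound $m(c)$ by counting traces per pattern. For a fixed pattern $p$ with $|p|\le k$ blocks, a trace assigns to each of the $t$ users of $Y$ one of the $|p|$ blocks or the value ``unassigned'', so there are at most $(|p|+1)^t\le(k+1)^t$ traces; hence $m_p\le(k+1)^t$ and $m(c)=\cO((k+1)^t)$. The counting is routine; the main obstacle is the second paragraph --- fixing the right invariance notion for ``involves only $t$ users'' and proving the trace-invariance fact cleanly --- together with verifying that the authorisation functions capture traces exactly.
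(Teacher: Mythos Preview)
Your proposal is correct and follows essentially the same approach as the paper: both enumerate, for each pattern $p$, the partial assignments of the $t$ special users to blocks of $p$ (your ``traces''), build one authorisation function per satisfying assignment with singleton lists on the special blocks and $U\setminus Y$ elsewhere, and bound the count by $(k+1)^t$. Your version is in fact somewhat more careful than the paper's, since you explicitly formalise ``involves only $t$ users'' via permutation-invariance and justify the trace-invariance step, whereas the paper simply asserts the corresponding equivalence observation.
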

\begin{proof}
Consider a constraint $c$ which only involves users $U' \subseteq U$, where $|U'| = t$.
We will say that plans $\pi$ and $\pi'$ are \emph{equivalent} iff $\pi'(s) = \pi(s)$ for every step $s$ such that $\pi(s) \in U'$ or $\pi'(s) \in U'$.
Observe that, if $\pi$ and $\pi'$ are equivalent then $\pi$ satisfies $c$ iff $\pi'$ satisfies $c$.


Consider an arbitrary pattern $p \in P(S)$ and the set of plans $\Pi$ whose pattern is $p$.
Observe that $\Pi$ can be partitioned into equivalence classes by our equivalence relation.
We will say that an equivalence class satisfies $c$ if it includes at least one plan that satisfies $c$.
(Note that, either, all the plans in an equivalence class satisfy $c$ or all of them do not satisfying $c$.)

According to the definition of pattern, a user can only be assigned to at most one block in $p$.
Thus, there exist $\cO((k+1)^t)$ assignments of users $U'$ to blocks in $p$.
Each such assignment defines exactly one equivalence class, hence there are $\cO((k+1)^t)$ equivalence classes within $\Pi$.

Let $\Pi_1, \Pi_2, \ldots, \Pi_r$ be all the equivalence classes within $\Pi$ that satisfy constraint $c$.
For each equivalence class $\Pi_i$, we will construct an authorisation function $A_i$.
Specifically, for each step $s$ such that $\pi(s) \in U'$, for some $\pi \in \Pi_i$, let $A_i(s) = \{ \pi(s) \}$.
(Recall that if $\pi(s) \in U'$ for some $\pi \in \Pi_i$ then $\pi'(s) = \pi(s)$ for every $\pi' \in \Pi_i$ by the definition of equivalence class.)
Also, for each step $s$ such that $\pi(s) \notin U'$ for some $\pi \in \Pi_i$, let $A_i(s) = U \setminus U'$.

It is easy to see that exactly the plans in $\Pi_i$ are allowed by the authorisation function $A_i$.
Then the $p$-authorisation family $\cA^{(c)}_p = \{ A_1, A_2, \ldots, A_r \}$ allows exactly the plans that satisfy $c$ and have pattern $p$.
Finally, an authorisation family $\cA^{(c)} = \{ \cA_p^{(c)} : p \in P(S) \}$ absorbs constraint $c$.
Since the number of authorisation functions in each $p$-authorisation family is bounded by $\cO((k+1)^t)$, the branching factor of $c$ is $\cO((k+1)^t)$.
%
\end{proof}

Lemma~\ref{th:t-users} demonstrates an important tendency.
Even if a constraint is user-dependent, the patterns may still play an important role.
In other words, the patterns on their own are insufficient to describe a non-UI constraint but they may help in describing it.
If we ignore the patters, the number of authorisation functions needed to describe a constraint that involves $t$ users, is $O(t^k)$, 
however these authorisation functions are distributed between several $p$-authorisation families, and each of them gets at most $O((k+1)^t)$ authorisation functions.

From Lemmas~\ref{th:t-steps}, \ref{th:t-users} and Corollary~\ref{th:mbranch}, we immediately get the following result.
\begin{theorem}
\label{th:fixed-size-constraints-fpt}
Let $t$ be a constant. 
Let $W = (S, U, A, C)$ be a WSP instance such that $C = C_1 \cup C_2 \cup C_3$, where $C_1$ contains only constraints of branching factor 1, $C_2$ contains only constraints each involving at most $t$ users and $C_3$ contains only constraints each involving $t$ steps.
Then $W$ can be solved in FPT time if $|C_2|$ is a computable function of $k$ only and $|C_3|$ is a constant.
\end{theorem}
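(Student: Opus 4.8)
The plan is to absorb all constraints of $C_2$ and $C_3$ into the authorisation family and then run Algorithm~\ref{alg:PDABA}, being careful to keep the $n$-dependent and $k$-dependent parts of the blow-up separate. First I would collect the branching factors supplied by the earlier results. By hypothesis every constraint in $C_1$ has branching factor $1$. Each constraint in $C_2$ involves at most $t$ users, so Lemma~\ref{th:t-users} gives it branching factor $\cO((k+1)^t)$; each constraint in $C_3$ has a scope of size $t$, so Lemma~\ref{th:t-steps} gives it branching factor $\cO(n^t)$. Proposition~\ref{th:mbranch} ensures that every one of these constraints is genuinely absorbable, so the absorption procedure is well-defined.

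Next I would follow the absorption argument of Theorem~\ref{th:abs1}, but rather than bounding everything by a single $m$ raised to the number of non-UI constraints, I would track the multiplicative growth of each $p$-authorisation family constraint by constraint. Starting from $\cA=\{A\}$, for which $|\cA_p|=1$ for all $p$, absorbing a constraint $c$ multiplies each $|\cA_p|$ by at most $m(c)$. Absorbing $C_1$ leaves the sizes unchanged, absorbing $C_2$ multiplies them by at most $\cO((k+1)^t)^{|C_2|}$, and absorbing $C_3$ by at most $\cO(n^t)^{|C_3|}$. Hence after all absorptions each $p$-authorisation family has size $\cO\bigl((k+1)^{t|C_2|}\,n^{t|C_3|}\bigr)$, and running Algorithm~\ref{alg:PDABA} --- which iterates over all $B_k\le 2^{k\log k}$ patterns and, for each, over the authorisation functions of its family, spending only polynomial time per iteration --- takes time
$$\cO^*\bigl(2^{k\log k}\,(k+1)^{t|C_2|}\,n^{t|C_3|}\bigr).$$

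It then remains to read off that this is FPT. Since $t$ and $|C_3|$ are constants, the factor $n^{t|C_3|}$ is polynomial in the input size and is absorbed into the $\cO^*$. Since $|C_2|$ is a computable function of $k$ only, the factor $(k+1)^{t|C_2|}\,2^{k\log k}$ is a computable function $f(k)$ of $k$ alone. Thus the running time is of the form $\cO(f(k)\,N^{O(1)})$, which is precisely the definition of an FPT algorithm.

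The main point to get right --- and essentially the only obstacle --- is the asymmetry between $C_2$ and $C_3$. Applying Theorem~\ref{th:abs1} verbatim with a single $m=\max\{\cO((k+1)^t),\cO(n^t)\}$ and $\ell=|C_2|+|C_3|$ would produce the factor $m^\ell$, which contains $n^{t|C_2|}$ with $|C_2|$ a function of $k$, i.e.\ $n$ raised to a function of $k$: this is not FPT. Keeping the product of individual branching factors, so that the $C_3$ contribution (polynomial in $n$, constant exponent) and the $C_2$ contribution (super-polynomial but in $k$ only) stay in separate factors, is exactly what makes the argument go through.
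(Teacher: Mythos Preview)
Your proposal is correct and matches the paper's own approach: the paper's proof is a one-line remark that the theorem ``immediately'' follows from Lemmas~\ref{th:t-steps} and~\ref{th:t-users} together with the absorption machinery, and you have supplied exactly the natural filling-in of that sketch. Your explicit observation that one must track the \emph{product} of individual branching factors---so that the $n^{t|C_3|}$ factor (polynomial, since $t$ and $|C_3|$ are constants) stays separate from the $(k+1)^{t|C_2|}$ factor (a function of $k$ only)---is precisely the point that makes the argument work and is implicit in the paper's terse justification.
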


Theorem~\ref{th:fixed-size-constraints-fpt} is a powerful result; it shows that the WSP is FPT for any `small' constraints as long as the number of the constraints is reasonable.

While Theorem~\ref{th:fixed-size-constraints-fpt} may create the impression that any compact WSP instances are FPT, this is not the case, see the example in Corollary~\ref{cor:wsp-cda-properties}(c).

\section{Formulations of WSP for general-purpose solvers}

In this section we discuss the WSP solution representations that can be used in various off-the-shelf solvers, and then talk about encodings of common constraints.

\subsection{Solution representations}

There are three solution representations known from the WSP literature.

\emph{User-Dependent Pseudo-Boolean} (UDPB) representation consists of Boolean variables $x_{s,u}$ for $s \in S$ and $u \in U$, where $x_{s,u} = 1$ iff step $s$ is assigned to user $u$.\footnote{We will use constants True and 1, and False and 0, interchangeably.}.
	This formulation was first used in \cite{CohenCGGJ16} and it enables convenient formulation of many constraints.
	It involves only Boolean variables and, thus, can be used with practically any solver.
	So far, none of the solvers we tested demonstrated FPT-like running times with the UDPB solution representation even for WSP with UI constraints; this solution representation seems to hide the inherent structure of such instances.
	
\emph{Pattern-Based Pseudo-Boolean} (PBPB) representation was introduced in \cite{KarapetyanPGG19} and it extends the UDPB formulation with auxiliary Boolean variables $M_{s', s''}$ for $s', s'' \in S$ such that $M_{s',s''} = 1$ iff $s'$ and $s''$ are assigned the same user.
	The $M$-variables alone define a pattern, see Section~\ref{sec:WSPwithUI}, hence the solver does not need to assign values to the $x$-variables to search for an eligible pattern.
	Thus, with the PBPB solution representation and the right branching, a general-purpose solver can solve an instance of WSP with UI constraints in FPT time, which is consistent with the experimental results.
	
	The PBPB representation still uses only Boolean variables and can be used with practically any solver.
	
	The PBPB formulation requires the following constraints:
\begin{equation}
	\label{eq:mx-symmetric}
	M_{s_1, s_2} = M_{s_2, s_1} 
		\quad \forall s_1, s_2 \in S, \\
	\end{equation}
	\begin{equation}
	\label{eq:mx-diagonal}
	M_{s, s} = 1
		\quad \forall s \in S, \\
	\end{equation}
	\begin{equation}
	\label{eq:mx-link-m1}
	M_{s_1, s_2} \land M_{s_2, s_3} \implies M_{s_1, s_3}
		\quad \forall s_1, s_2, s_3 \in S, \\
	\end{equation}
	\begin{equation}
	\label{eq:mx-link-m2}
	\lnot M_{s_1, s_2} \land M_{s_2, s_3} \implies \lnot M_{s_1, s_3}
		\quad \forall s_1, s_2, s_3 \in S, \\
	\end{equation}
	\begin{equation}
	\label{eq:mx-link1}
	M_{s_1, s_2} \implies x_{s_1, u} = x_{s_2, u}
		\quad \forall s_1, s_2 \in S,\ \forall u \in U, \\
	\end{equation}
	\begin{equation}
	\label{eq:mx-link2}
	\lnot M_{s_1, s_2} \implies \lnot x_{s_1, u} \lor \lnot x_{s_2, u}
		\quad \forall s_1 \neq s_2 \in S,\ \forall u \in U.
	\end{equation}
	Constraints~(\ref{eq:mx-symmetric}) and~(\ref{eq:mx-diagonal}) ensure symmetry of the $M$-variables and correct assignment of dummy variables $M_{s,s}$.
	Constraints~(\ref{eq:mx-link-m1}) and~(\ref{eq:mx-link-m2}) implement the transitive properties: if $s_1$ and $s_2$ are assigned the same user and $s_2$ and $s_3$ are assigned the same user then $s_1$ and $s_3$ are assigned the same user; also, if $s_1$ and $s_2$ are assigned different users but $s_2$ and $s_3$ are assigned the same user then $s_1$ and $s_3$ are assigned different users.
	Constraints~(\ref{eq:mx-link-m1}) and~(\ref{eq:mx-link-m2}) are optional, however in our experience they speed up the solution process.
	Finally, constraints~(\ref{eq:mx-link1}) and~(\ref{eq:mx-link2}) link the $M$-variables with the $x$-variables.
	
\emph{Constraint-Satisfaction} (CS) rerpesentation, introduced in~\cite{KarapetyanPGG19}, consists of $k$ integer variables $y_s \in [n]$, $s \in S$, such that $y_s$ is the index of the user assigned to step $s$.
    \added{Unlike the other two representations, the CS representation involves non-Boolean variables and hence it cannot be used with SAT or Pseudo-Boolean solvers; it requires more expressive solvers such as CSP, Satisfiability Modulo Theories or Integer Programming solvers.}
    
	It turns out that a CSP solver, with a certain encoding of constraints, may demonstrate FPT-like performance on instances of WSP with UI constraints~\cite{KarapetyanPGG19}.


\subsection{Encoding of WSP constraints}

In this section, we explain how some common constraints and constraint classes can be encoded in each formulation.

\bigskip

\noindent
\textbf{Binding of duty.}
The binding of duty constraint requests that $s_1$ and $s_2$ are assigned to the same user.
This is a basic UI constraint that can actually be preprocessed; replace steps $s_1$ and $s_2$ with one new step, update the authorisations and other constraints accordingly.
Alternatively, we can encode it as $x_{s_1,u} = x_{s_2,u}$ for every $u \in U$ (in UDPB), or $M_{s_1,s_2} = 1$ (in PBPB), or $y_{s_1} = y_{s_2}$ (in CS).

\bigskip

\noindent
\textbf{Separation of duty.}
The separation of duty constraint requests that $s_1$ and $s_2$ are assigned different users.
This is another basic UI constraint.
We can encode it as $\lnot x_{s_1,u} \lor \lnot x_{s_2,u}$ for every $u \in U$ (in UDPB), or $M_{s_1,s_2} = 0$ (in PBPB), or $y_{s_1} \neq y_{s_2}$ (in CS).

\bigskip

\noindent
\textbf{Counting constraints.}
Counting constraints are more complex UI constraints; they restrict the number of users assigned to the scope steps $T \subseteq S$.
The most studied counting constraints are at-most-$r$ and at-least-$r$ constraints which request that the number of users assigned to any of steps $T$ is at most $r$ or at least $r$, respectively.

With the UDPB solution representation, one can introduce auxiliary Boolean variables $z_u$ for $u \in U$ such that $z_u = 1$ iff user $u$ is assigned at least one step in $T$.
Then it only remains to place a constraint on $\sum_{u \in U} z_u$.

Any UDPB encoding will also work in PBPB, however it is possible to produce more compact formulations for PBPB.
For example, one can count variables $M_{s_1, s_2}$, $s_1, s_2 \in T$, which have values 1.
This approach is usually sufficient for reasonably-sized counting constraints.
Alternatively, one can explicitly enumerate all feasible/infeasible scenarios.
For example, the at-most-$r$ constraints can be encoded by requesting that in every subset $T' \subset T$ of size $r + 1$, there is at least one pair $s_1 \neq s_2 \in T'$ such that $M_{s_1, s_2} = 1$.

The latter approach also works with the CS solution representation, and is the preferred encoding in our experience.

\bigskip

\noindent
\textbf{Other UI constraints.}
With the UDPB and PBPB solution representations, any UI constraint can be expressed via the $x_{s,u}$ variables; however, such an encoding will not fully exploit the special structure of the UI constraints, as it will have to explicitly enumerate all the users.
In this sense, the PBPB solution representation has a significant advantage.
It was shown in~\cite{KarapetyanPGG19} that any UI constraint can be formulated using the $M$-variables only, in which case the encoding is likely to be more compact and separated from the variables concerned with user identities.
The latter point is crucial for achieving the FPT-like performance as it decouples the hard part of the problem (the pattern-related one) from the easy one (the user assignment).

The CS representation also allows encoding of the UI constraints without enumerating all possible values of $y_s$, hence allowing FPT-time solution; indeed, FPT-like performance on the CS-based formulation was observed in~\cite{KarapetyanPGG19}.

\bigskip

\noindent
\textbf{Non-UI constraints.}
As discribed in Section~\ref{sec:CDA}, the non-UI constraints can be absorbed into CDA authorisations.
Here we discuss how CDAs can be encoded using Boolean variables.

Let $\{ \cA^{(c)}_{1}, \cA^{(c)}_{2}, \ldots, \cA^{(c)}_{r} \} = \{ \cA^{(c)}_{p} : p \in P(S) \}$ be a set of distinct $p$-authorisation families in the authorisation family of a constraint $c$, where $r$ is the number of such families.
Let $Q(\cA_p) = \{ p' \in P(S) : \cA_{p'}^{(c)} = \cA_p \}$, i.e.\ $Q(\cA_p)$ is the set of patterns that share the $p$-authorisation family $\cA_p$.
Note that $\{ Q(\cA^{(c)}_{1}), Q(\cA^{(c)}_{2}), \ldots, Q(\cA^{(c)}_{r})\}$ forms a partition of $P(S)$.
Introduce an auxiliary Boolean variable $g_i$ for $i \in [r]$ and add a constraint
\begin{equation}
\bigvee_{i=1}^r g_i = 1
\end{equation}
to ensure that at least one $g_i$ is set to 1.
Now, for $i \in [r]$, enforce that if $g_i = 1$ then $P(\pi) \in Q(\cA^{(c)}_{i})$, where $\pi$ is a solution to the WSP\@.
Enforcing certain patterns is a UI constraint, and hence it can be encoded using the techniques discussed above.
For a compactly formulated constraint, it will usually be possible to encode such an enforcement compactly.

If $g_i = 1$, we need to enforce the $p$-authorisation family $\cA^{(c)}_i$.
We may use the following encoding for every $i \in [r]$.
Let $\cA^{(c)}_{i} = \{ A^{(c)}_{i,1}, A^{(c)}_{i,2}, \ldots, A^{(c)}_{i,t} \}$ for some $t$.
We will need auxiliary Boolean variables $a_{j}$ for $j \in [t]$.
Ensure that at least one of $a_1, a_2, \ldots, a_t$ is set to 1 if $g_i$ is 1:
\begin{equation}
\bigvee_{j = 1}^{t} a_j = g_i.
\end{equation}
For the UDPB and PBPB representations, enforce the authorisation functions by requesting
\begin{equation}
a_j \implies \lnot x_{s,u} \quad \forall s \in S,\ u \notin A^{(c)}_{i,j}(s),\ j \in [t].
\end{equation}
For the CS representation, use
\begin{equation}
a_j \implies y_{s} \neq u \quad
	\forall s \in S,\ u \notin A^{(c)}_{i,j}(s),\ j \in [t].
\end{equation}

In practice, many constraints do not require the full power of CDAs\@.
For example, it may be that $r = 1$ or $t = 1$ for every $i \in [r]$.
In these cases, some of the above steps can be trivially skipped.

\section{Computational experiments}

The aims of our computational study are as follows:
\begin{itemize}
	\item
	Compare the performance of the solvers and formulations.
	
	\item
	Compare computational difficulty of UI and non-UI WSP constraints.
	
	\item
	Study how the solution times change with the change of the problem sizes $k$ and $n$.
	
	\item
	Make the benchmark instances and solvers publicly available to enable further research and possible use.
\end{itemize}

\added{All the experiments were conducted on a machine based on two Xeon E5-2630 v2 CPUs (2.60 GHz), with 32 GB of RAM\@.
Each solver was using exactly one thread and the number of parallel experiments was limited to the number of physical CPU cores.
The instance generator was implemented in C\# and the off-the-shelf solvers were executed from Python code.}

\added{The instance generator, benchmark instances and all the solvers are publicly available at \url{http://doi.org/10.17639/nott.7116}.}

\subsection{Instance generator}
\label{sec:generator}

In a thought experiment, let us define a WSP instance without constraints and then keep adding constraints to it one by one.
The original instance, without the constraints, is likely to have many solutions, and finding one is typically very easy.
Each new constraint will usually reduce the number of solutions and make the instance computationally harder, until some constraint will make the instance unsatisfiable.
From then on, adding more constraints will be making the instance easier again, as the conflicts between the constraints will be more apparent.

The region of the instance space around this boundary between satisfiable and unsatisfiable instances is called \emph{phase transition} (PT), see e.g.~\cite{GentW94,Grant97}\@.
A satisfiable PT instance has only a few valid plans, and adding a constraint is likely to make the instance unsatisfiable.
An unsatisfiable PT instance is likely to become satisfiable if any of its constraints is removed.
PT instances are the hardest and, hence, they are useful for two reasons:
(i)~performance of the solvers on the PT instances is an indicator of the practical worst case;
(ii)~the PT can be used as a reference point, allowing comparison of instances of different structure.

The PT study of WSP was first conducted in~\cite{KarapetyanPGG19} for a generator with UI constraints only.
In this paper, we adopt a similar methodology for the WSP with arbitrary constraints\@.

Our instance generator is based on the instance generator of~\cite{KaGaGu15}.
The authorisation lists are random, with the number of steps varying uniformly between 1 and $\lfloor k / 2 \rfloor$.
The constraints are also produced randomly and uniformly.
The generator supports the following constraints:
\begin{itemize}
	\item
	Separation-of-duty, denoted as SoD;
	
	\item
	At most 3 users, with scope of size 5; denoted as AM3;
	
	\item
	SUAL with scope size 5, $h = 3$ and 5 users in $X$;
	
	\item
	WL with scope size 2, 2 teams, each of size $\lfloor n / 4 \rfloor$;
	
	\item
	ADA with $|U_1| = |U_2| = \lfloor n / 2 \rfloor$.
\end{itemize}
The parameters of the constraints were selected based on the assumptions about real-world applications and also to make them of comparable restrictiveness.

Observe that the SoD and AM3 constraints have in some sense the opposite action, one increasing the diversity of the user assignment whereas the other one decreasing it.
As a result, a mixture of these constraints gives reliable control over satisfiability of instances.\footnote{Observe that, e.g., SoD on their own are unlikely to make a random instance unsatisfiable; even if there is an SoD constraint for every pair of steps and each user is authorised to one step only, the instance is likely to be satisfiable if $n \ge \Theta(k \ln k)$ (see the Coupon Collector's problem~\cite{NewmanS60}).}

Let $\text{WSP}(k, n, m_1 \langle T_1 \rangle, m_2 \langle T_2 \rangle, \dots)$ be the set of WSP instances with $m_1$ constraints of type $\langle T_1 \rangle$, $m_2$ constraints of type $\langle T_2 \rangle$, etc.
We assume that this set of instances is infinite, in the sense that we can draw as many instances as we need, although we allow replacement; drawing an instance from this set corresponds to running the instance generator with a random seed value.
We will say that a set of instances is PT if the frequencies of picking satisfiable and unsatisfiable instances are close to each other.

Let $e_{k,n}$ be the number of SoD constraints such that $\text{WSP}(k, n, k \langle \text{AM3} \rangle, e_{k,n} \langle \text{SoD} \rangle)$ is PT\@.
We use two classes of instances in our experiments:
\begin{enumerate}
	\item
	$\text{WSP}(k, n, k \langle \text{AM3} \rangle, e_{k,n} \langle \text{SoD} \rangle)$ for comparison of solvers.
	As a shortcut, we will use the following notation: $\text{WSP}(k, n)$.
	These instances represent typical problems with UI constraints only.
	
	\item
	$\text{WSP}(k, n, k \langle \text{AM3} \rangle, 0.75 e_{k,n} \langle \text{SoD} \rangle, e^{\langle c \rangle}_{k,n} \langle c \rangle)$ for some constraint type $c$, where $e^{\langle c \rangle}_{k,n}$ is selected to make the instances PT\@.
	As a shortcut, we will use the following notation: $\text{WSP}_{\langle c \rangle}(k, n)$.
	Note that $\text{WSP}_{\langle \text{SoD} \rangle}(k, n) = \text{WSP}(k, n)$.
	If $c$ is non-UI, these instances represent problems with a mixture of UI and non-UI constraints.
\end{enumerate}

\ifdefined\arxiv

\added{The values of $e_{k,n}$ and $e^{\langle c \rangle}_{k,n}$ are reported in Figure~\ref{fig:pt}.
Figure~\ref{fig:pt-10k} provides the values along the $n = 10k$ slice, i.e.\ for instances such that $n = 10k$, while Figure~\ref{fig:pt-18} provides the values along the $k = 18$ slice.}

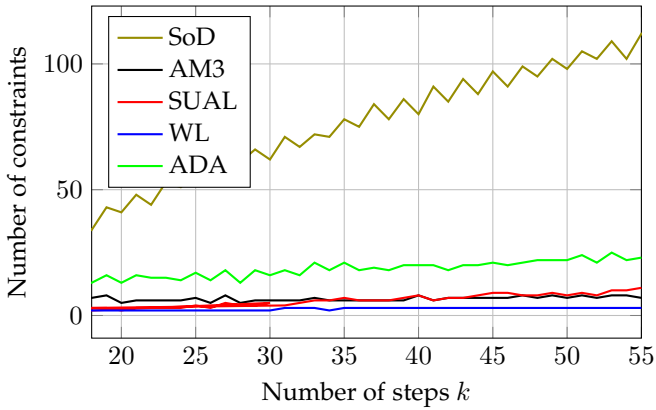
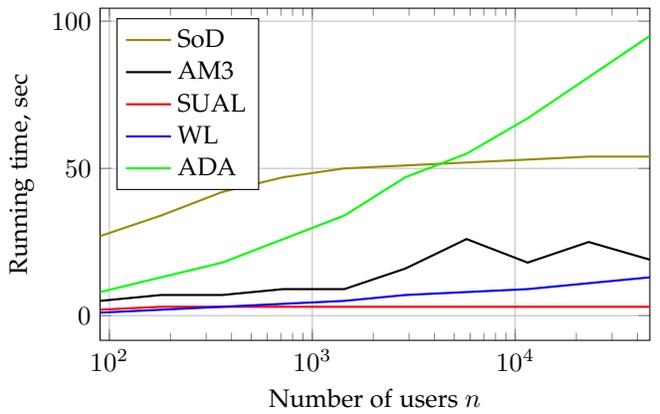
\begin{figure*}[tb]
\subfloat[The values of $e_{k,n}$ and $e^{\langle c \rangle}_{k,n}$ along the $n = 10k$ slice.]
{
	\begin{tikzpicture}
	\begin{axis}[
		compat=newest,
		width=\columnwidth,
		height=6cm,
		legend pos=north west,
		xlabel={Number of steps $k$},
		ylabel={Number of constraints},
		title={},
		grid=major,
		xmin=18,
		xmax=55,
		legend cell align=left,
		every axis plot post/.append style={thick, mark=none}
	]
        \addplot+[olive, solid] table[
	        col sep=tab,
			x=k,
			y=e,
		] {pt-10k-SoD.txt};
		\addlegendentry{SoD}

        \addplot+[black, solid] table[
	        col sep=tab,
			x=k,
			y=e,
		] {pt-10k-AM3.txt};
		\addlegendentry{AM3}

        \addplot+[red, solid] table[
	        col sep=tab,
			x=k,
			y=e,
		] {pt-10k-SUAL.txt};
		\addlegendentry{SUAL}

        \addplot+[blue, solid] table[
	        col sep=tab,
			x=k,
			y=e,
		] {pt-10k-WL.txt};
		\addlegendentry{WL}

        \addplot+[green, solid] table[
	        col sep=tab,
			x=k,
			y=e,
		] {pt-10k-ADA.txt};
		\addlegendentry{ADA}
        \end{axis}		
	\end{tikzpicture}
\label{fig:pt-10k}
}
\subfloat[The values of $e_{k,n}$ and $e^{\langle c \rangle}_{k,n}$ along the $k = 18$ slice.]
{
	\begin{tikzpicture}
	\begin{semilogxaxis}[
		compat=newest,
		width=\columnwidth,
		height=6cm,
		legend pos=north west,
		xlabel={Number of users $n$},
		ylabel={Running time, sec},
		title={},
		grid=major,
		xmin=90,
		xmax=46080,
		legend cell align=left,
		every axis plot post/.append style={thick, mark=none}
	]
	    \addplot+[olive, solid] table[
	        col sep=tab,
			x=n,
			y=e,
		] {pt-18-SoD.txt};
		\addlegendentry{SoD}

        \addplot+[black, solid] table[
	        col sep=tab,
			x=n,
			y=e,
		] {pt-18-AM3.txt};
		\addlegendentry{AM3}
        \addplot+[red, solid] table[
	        col sep=tab,
			x=n,
			y=e,
		] {pt-18-SUAL.txt};
		\addlegendentry{SUAL}
		
		\addplot+[blue, solid] table[
	        col sep=tab,
			x=n,
			y=e,
		] {pt-18-WL.txt};
		\addlegendentry{WL}

        \addplot+[green, solid] table[
	        col sep=tab,
			x=n,
			y=e,
		] {pt-18-ADA.txt};
		\addlegendentry{ADA}

	\end{semilogxaxis}
	\end{tikzpicture}
	\label{fig:pt-18}
}
\caption{The values of $e_{k,n}$ and $e^{\langle c \rangle}_{k,n}$ along the $k = 18$ slice that we obtained to produce PT instances along each slice.
In the legend, SoD stands for $\text{WSP}_{\langle \text{SoD} \rangle}(k, n)$, etc.}
\label{fig:pt}
\end{figure*}

\added{Along the $n = 10k$ slice, the values of $e^{\langle c \rangle}_{k,n}$ grow approximately linearly for all constraint types.
Observe that, by construction, $e^{\langle \text{AM3} \rangle}_{k,n} \approx 0.25k$; the other constraints (SUAL, WL and ADA) seem to have comparable properties.
Along the $k = 18$ slice, however, the number of the SoD constraints converges relatively quickly while the number of the other constraints grows, approximately following the $\log n$ scaling.
While it is difficult to give a detailed explanation of the behaviours of $e^{\langle c \rangle}_{k,n}$, it is easy to observe that the number of SoD constraints in the $k = 18$ slice is trivially bounded by ${k \choose 2} = 153$, hence the growth of $e_{k,n}$ has to slow down.
In fact, the growth of the other graphs is also expected to slow down for similar reasons but not as early.}
\else
\added{For analysis of the $e_{k,n}$ and $e^{\langle c \rangle}_{k,n}$ values in our instances see an extended version of this paper at~\cite{arxiv-version}.}
\fi

\subsection{Comparison of solvers and formulations}
\label{sec:solvers-experiments}

In this experiment we compare the combinations of general-purpose solvers and solution representations.
Specifically, the solvers are: CSP solver `OR-Tools', Satisfiability Modulo Theories solver `Z3' and Pseudo-Boolean solver `SAT4J'\@.
OR-Tools and Z3 support integer variables and, hence, can work with every solution representation.
SAT4J only supports Boolean variables, hence it does not work with the CS solution representation.

The results are reported in Table~\ref{tab:matrix}.
\added{For each combination of solver, solution representation and value of $k$, we solve 100 instances from $\text{WSP}(k, 10k)$.  Then we report the maximum value of $k$ such that the median running time is under 60~seconds.}

\begin{table}[htb]
\begin{center}
\begin{tabular}{llll}
\toprule
	& UDPB
	& PBPB
	& CS \\
\midrule
SAT4J
	& 16
	& 47
	& -- \\
OR-Tools
	& 19
	& 50
	& 41 \\
Z3
	& 10
	& 43
	& 34 \\
\bottomrule
\end{tabular}
\end{center}

\caption{
	The maximum size $k$ of the WSP instances such that the median solution time is under 60~seconds; the larger the better.
}
\label{tab:matrix}
\end{table}

One can see that OR-Tools clearly outperforms Z3 on every formulation.
It also slightly outperforms SAT4J\@.
The best solution representation is PBPB, followed by CS\@.
This is consistent with the findings of~\cite{KarapetyanPGG19}, although the difference between PBPB and CS was less obvious in the previous experiments, partly because the PBPB formulation was only tested with SAT4J, and also because in this study we use a more efficient encoding of the AM3 constraints.

In further experiments, we only use the winning OR-Tools solver, and study the behaviour of the two most efficient formulations, PBPB and CS.

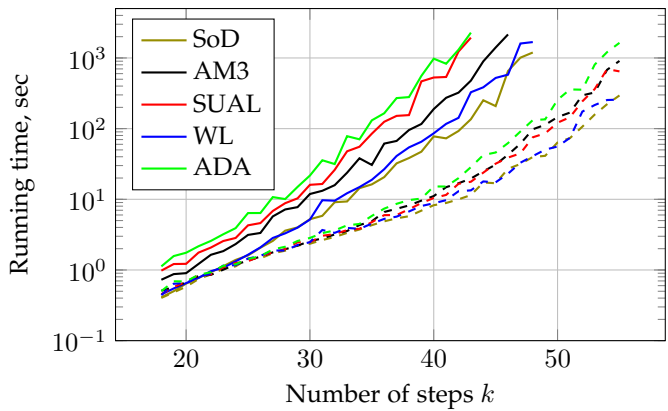
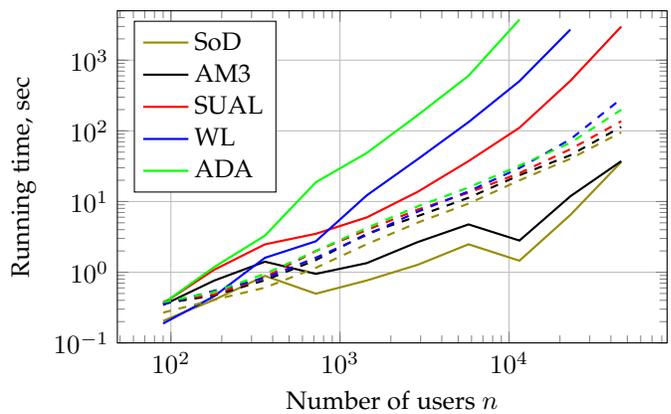
\begin{figure*}[tb]
\subfloat[Scaling of the running times along the slice $n = 10k$.]
{
	\begin{tikzpicture}
	\begin{semilogyaxis}[
		compat=newest,
		width=\columnwidth,
		height=6cm,
		legend pos=north west,
		xlabel={Number of steps $k$},
		ylabel={Running time, sec},
		title={},
		grid=major,
		ymin=0.1,
		ymax=5000,
		legend cell align=left,
		every axis plot post/.append style={thick, mark=none}
	]
        \addplot+[olive, solid] table[
	        col sep=tab,
			x=k,
			y={OrTools CS},
		] {scaling-SoD-10k.txt};
		\addlegendentry{SoD}
		
	    \addplot+[black, solid] table[
	        col sep=tab,
			x=k,
			y={OrTools CS},
		] {scaling-AM3-10k.txt};
		\addlegendentry{AM3}

        \addplot+[red, solid] table[
	        col sep=tab,
			x=k,
			y={OrTools CS},
		] {scaling-SUAL-10k.txt};
		\addlegendentry{SUAL}

        \addplot+[blue, solid] table[
	        col sep=tab,
			x=k,
			y={OrTools CS},
		] {scaling-WL-10k.txt};
		\addlegendentry{WL}

        \addplot+[green, solid] table[
	        col sep=tab,
			x=k,
			y={OrTools CS},
		] {scaling-ADA-10k.txt};
		\addlegendentry{ADA}

		\addplot+[olive, dashed] table[
	        col sep=tab,
			x=k,
			y={OrTools PBPB},
		] {scaling-SoD-10k.txt};	

		\addplot+[black, dashed] table[
	        col sep=tab,
			x=k,
			y={OrTools PBPB},
		] {scaling-AM3-10k.txt};	

        \addplot+[red, dashed] table[
	        col sep=tab,
			x=k,
			y={OrTools PBPB},
		] {scaling-SUAL-10k.txt};
		
		\addplot+[blue, dashed] table[
	        col sep=tab,
			x=k,
			y={OrTools PBPB},
		] {scaling-WL-10k.txt};

        \addplot+[green, dashed] table[
	        col sep=tab,
			x=k,
			y={OrTools PBPB},
		] {scaling-ADA-10k.txt};
		
	\end{semilogyaxis}		
	\end{tikzpicture}
\label{fig:scaling-10k}
}
\subfloat[Scaling of the running times along the slice $k = 18$.]
{
	\begin{tikzpicture}
	\begin{loglogaxis}[
		compat=newest,
		width=\columnwidth,
		height=6cm,
		legend pos=north west,
		xlabel={Number of users $n$},
		ylabel={Running time, sec},
		title={},
		grid=major,
		ymin=0.1,
		ymax=5000,
		legend cell align=left,
		every axis plot post/.append style={thick, mark=none}
	]
	    \addplot+[olive, solid] table[
	        col sep=tab,
			x=n,
			y={OrTools CS},
		] {scaling-SoD-kIs18.txt};
		\addlegendentry{SoD}

        \addplot+[black, solid] table[
	        col sep=tab,
			x=n,
			y={OrTools CS},
		] {scaling-AM3-kIs18.txt};
		\addlegendentry{AM3}
        \addplot+[red, solid] table[
	        col sep=tab,
			x=n,
			y={OrTools CS},
		] {scaling-SUAL-kIs18.txt};
		\addlegendentry{SUAL}
		
		\addplot+[blue, solid] table[
	        col sep=tab,
			x=n,
			y={OrTools CS},
		] {scaling-WL-kIs18.txt};
		\addlegendentry{WL}

        \addplot+[green, solid] table[
	        col sep=tab,
			x=n,
			y={OrTools CS},
		] {scaling-ADA-kIs18.txt};
		\addlegendentry{ADA}

        \addplot+[olive, dashed] table[
	        col sep=tab,
			x=n,
			y={OrTools PBPB},
		] {scaling-SoD-kIs18.txt};
		
        \addplot+[black, dashed] table[
	        col sep=tab,
			x=n,
			y={OrTools PBPB},
		] {scaling-AM3-kIs18.txt};

        \addplot+[red, dashed] table[
	        col sep=tab,
			x=n,
			y={OrTools PBPB},
		] {scaling-SUAL-kIs18.txt};

        \addplot+[blue, dashed] table[
	        col sep=tab,
			x=n,
			y={OrTools PBPB},
		] {scaling-WL-kIs18.txt};
		
        \addplot+[green, dashed] table[
	        col sep=tab,
			x=n,
			y={OrTools PBPB},
		] {scaling-ADA-kIs18.txt};
				
	\end{loglogaxis}
	\end{tikzpicture}
	\label{fig:scaling-18}
}
\caption{Scaling of the running times for two different slices through the $(k, n)$ space.
Instances are solved using OR-Tools.
The solid lines correspond to the CSP formulation; dashed lines correspond to the PBPB formulation.
\added{In the legend, SoD stands for $\text{WSP}_{\langle \text{SoD} \rangle}(k, n)$, etc.}}
\end{figure*}

%
%
%

\subsection{Scaling analysis: $n = 10k$}
\label{sec:scaling-experiments}

In this section, we study scaling of the performance in the $n = 10k$ slice, i.e.\ how the solution time depends on $k$ if $n = 10k$.
This slice was identified as a realistic ratio between $n$ and $k$ in the literature.
In~\cite{KarapetyanPGG19}, it was observed that all the best solution methods demonstrated scaling compatible with the theoretical expectations $O(2^{k \log k})$.
Specifically, the running time closely followed $O(2^{k \log k / 13.2})$, where the 13.2 factor reflected the improvements achieved by heuristics.
In this paper, we study scaling of the running time for WSP with different constraint types.

Figure~\ref{fig:scaling-10k} shows how the PBPB and CS formulations solved with OR-Tools behave when the instance size grows.
\added{Each running time is averaged over 100 instances of the given size.}
Observe that PBPB (dashed lines) demonstrated considerably better performance and slightly better consistency across the constraint types compared to CS (solid lines).
Note that the $y$-axis in this figure is logarithmic, hence any straight line corresponds to exponential time growth.
It appears that the running time of both solvers is slightly super-exponential, consistent with the theoretical expectation $O(2^{k \log k})$.
The complexity seems to be consistent between UI and non-UI constraints demonstrating the effectiveness of CDAs.

\subsection{Scaling analysis: $k = 18$}

The main goal of the analysis in the previous section was to understand how the solution time depends on the size of the instance (for various constraint types); we were scaling all the parameters including $k$, $n$, the number of steps to which each user is authorised and the number of constraints.
In this section, the goal is to verify if the solvers demonstrate FPT-like performance, i.e.\ scale polynomially with $n$.
Thus, we fix the number of steps $k$ and vary the number of users $n$.
Most of the parameters of the instances remain unchanged; however, the increase in the number of users makes the instances more `lose' and we have to compensate that by adjusting the number of some of the constraints, to remain in the PT region (for details, see Section~\ref{sec:generator}).

The results of this experiment are presented in Figure~\ref{fig:scaling-18}.
\added{Each running time is averaged over 100 instances of the given size.}
Here, both axes have logarithmic scale, hence any polynomial appears as a straight line.
The plots for PBPB (dashed lines) are nearly straight, 
with the slope corresponding to a polynomial of degree around 1.
In other words, the solution time grows approximately linearly with the number of users, and this applies to both UI and non-UI constraints.
In fact, the PBPB solution time seems to be almost independent of the constraint type.
In contrast, the CS running times (solid lines) significantly depend on the constraint type.
For the UI constraints, the scaling is sub-linear whereas for the non-UI constraints it is super-linear and possibly slightly non-polynomial.
We conclude that PBPB is a more robust formulation, particularly if the set of constraints is diverse, however there exist scenarios when the CS formulation may outperform PBPB.

\section{Conclusions}
\label{sec:c}
While the previous research on the WSP was mainly focused on the family of UI constraints \added{or relatively restrictive generalisations such as class-independent constraints}, in this paper we showed how to solve WSP with arbitrary (reasonable) constraints in FPT-like time.
To achieve this, we generalised the concept of authorisations by making them context-dependent and showed how to absorb non-UI constraints into context-dependent authorisations. 
This allowed us to extend methods developed for WSP with UI constraints to arbitrary constraints.

Previously, it was shown that WSP with UI constraints can be efficiently solved by general-purpose solvers with the appropriate formulations.
We supported and extended this observation in several ways: (i) we formalised the three WSP solution representations as well as techniques to encode various WSP constraints, including non-UI constraints; (ii) we compared the solvers and formulations in a systematic way; (iii) we tested the best solvers/formulations with a range of UI and non-UI constraints to demonstrate that the appropriate formulations are efficient on WSP with arbitrary constraints.
In other words, we demonstrated that the practical complexity of WSP remains FPT-like even if some of the constraints are non-UI.

More specifically, we demonstrated that the running times of solvers with the appropriate WSP formulations are FPT-like.\footnote{\added{Note that experimental research inherently provides only limited evidence.  E.g., in our case, we considered only a small number of types of UI and non-UI constraints.}}
We hope that this result will motivate researchers to look for FPT-aware formulations of other FPT problems, to enable development of practical solution methods with FPT-like running times.

	

Finally, we made the new instance generator, benchmark instances and all the solvers publicly available, to support future studies: \url{http://doi.org/10.17639/nott.7116}.


\bigskip

\noindent
\textbf{Acknowledgement.}
Gregory Gutin's research was supported by the Leverhulme Trust award RPG-2018-161.

\bibliographystyle{IEEEtran}
\bibliography{refs}

\vfill

\begin{IEEEbiographynophoto}{Daniel Karapetyan}
Dr Daniel Karapetyan received his PhD degree in computer science from Royal Holloway, University of London, UK, in 2010.
Since 2020, he has been assistant professor of computer science at the University of Nottingham.  
His research interests include algorithm design, artificial intelligence and data science applied to combinatorial optimisation and decision support.
\end{IEEEbiographynophoto}

\begin{IEEEbiographynophoto}{Gregory Gutin}
Gregory Gutin received the PhD degree in mathematics from Tel Aviv
University, Israel, in 1993. Since 2000 he has been a professor of
computer science with Royal Holloway, University of London, United
Kingdom. His research interests include access control, combinatorial
optimization, graph theory and applications, and parameterized
algorithms and complexity.
\end{IEEEbiographynophoto}
\end{document}